\documentclass[%
 reprint,
 amsmath,amssymb,
 aps,
 pre,
]{revtex4-2}

\usepackage{graphicx}
\usepackage{dcolumn}
\usepackage{bm}
\usepackage{physics}              
\usepackage{xcolor}               
\usepackage{multirow}
\usepackage{setspace}
\usepackage{enumerate}

\usepackage[ruled,linesnumbered]{algorithm2e}

\usepackage{amsthm}
\newtheorem{lemma}{Lemma}
\newtheorem{theorem}{Theorem}

\usepackage[colorlinks,linkcolor=red!90!black,citecolor=green!75!black]{hyperref}

\newcommand{\setzero}{\setlength{\itemsep}{0pt}}

\DeclareMathOperator{\KL}{KL}
\DeclareMathOperator{\Var}{Var}
\DeclareMathOperator{\ESS}{ESS}
\DeclareMathOperator{\CESS}{CESS}

\newcommand{\D}{\mathrm{d}} 
\newcommand{\Ge}{\geqslant}
\newcommand{\Le}{\leqslant}
\newcommand{\T}{\top}
\newcommand{\ep}{\varepsilon}

\begin{document}

\title{Jeffreys Flow: Robust Boltzmann Generators for Rare Event Sampling via Parallel Tempering Distillation}

\author{Guang Lin}
\email{guanglin@purdue.edu}
\author{Christian Moya}
\email{cmoyacal@purdue.edu}
\author{Di Qi}
\email{qi117@purdue.edu}
\author{Xuda Ye}
\email{ye481@purdue.edu}
\affiliation{Department of Mathematics, Purdue University, West Lafayette, IN 47907, USA}

\date{\today}

\begin{abstract}
Sampling physical systems with rough energy landscapes is hindered by rare events and metastable trapping. While Boltzmann generators already offer a solution, their reliance on the reverse Kullback--Leibler divergence frequently induces catastrophic mode collapse, missing specific modes in multi-modal distributions. Here, we introduce the Jeffreys Flow, a robust generative framework that mitigates this failure by distilling empirical sampling data from Parallel Tempering trajectories using the symmetric Jeffreys divergence. This formulation effectively balances local target-seeking precision with global modes coverage. We show that minimizing Jeffreys divergence suppresses mode collapse and structurally corrects inherent inaccuracies via distillation of the empirical reference data. We demonstrate the framework's scalability and accuracy on highly non-convex multidimensional benchmarks, including the systematic correction of stochastic gradient biases in Replica Exchange Stochastic Gradient Langevin Dynamics and the massive acceleration of exact importance sampling in Path Integral Monte Carlo for quantum thermal states.
\end{abstract}

\maketitle

\section{Introduction}
\label{section: introduction}

Rare event sampling is a central challenge in statistical mechanics and computational physics~\cite{bucklew2004introduction, rubino2009rare, rubino2009introduction}. Specifically, consider a target distribution $\pi(x) \propto e^{-U(x)}$ on $x\in\mathbb{R}^d$, where the potential function $U(x)$ exhibits multiple local modes separated by high energy barriers. Classical Monte Carlo methods, such as Metropolis--Hastings \cite{chib1995understanding, robert2009metropolis}, Hamiltonian Monte Carlo (HMC) \cite{betancourt2017conceptual, chen2014stochastic}, and Langevin dynamics \cite{meyn2012markov},
routinely suffer from poor performance, since the generated samples tend to become trapped within a few localized basins. This difficulty is formalized by the Eyring--Kramers formula~\cite{eyring1935activated, kramers1940brownian}, which dictates that the transition probability between metastable modes decays exponentially with the height of the intervening energy barrier~\cite{bouchet2016generalisation, lee2022non}, rendering such transitions as \emph{rare events} for standard Monte Carlo methods.

To simulate rare events, numerous enhanced Monte Carlo methods have been developed: Umbrella Sampling~\cite{torrie1977nonphysical, virnau2004calculation, kastner2011umbrella}, Simulated Annealing~\cite{van1987simulated, bertsimas1993simulated, nikolaev2010simulated}, Sequential Monte Carlo (SMC)~\cite{doucet2001introduction, cappe2007overview, chopin2020introduction, wills2023sequential}, Transition Path Sampling~\cite{dellago2002transition}, Metadynamics~\cite{laio2002escaping}, and Parallel Tempering (PT)~\cite{earl2005parallel, miasojedow2013adaptive, syed2022non, deng2023non}. Among these, PT is widely adopted because it is simple and easy to implement. The core mechanism of PT is to simulate multiple replicas of the system in parallel across a temperature ladder. By frequently swapping configurations between adjacent replicas, PT allows low-temperature simulations to inherit the global ergodicity inherent to the high-temperature limits.

The flow-based generative models emerge as another promising approach for sampling complex target distributions. Empowered by the rapid advancement of GPU architectures, these methods employ a sequence of invertible normalizing flows to push forward a simple base distribution $\pi_0(x)$ towards the target distribution $\pi(x)$. This framework, known as the Boltzmann generator~\cite{noe2019boltzmann,wu2020stochastic}, is conceptually straightforward and primarily relies on minimizing the reverse Kullback--Leibler (KL) divergence. Formally, the goal is to find a transport map $F$ that minimizes $D_{\text{KL}}(F_{\#} \pi_0 \| \pi)$, where $F_{\#} \pi_0$ denotes the pushforward distribution. In practice, $F$ is composed of invertible layers parameterized by expressive architectures such as RealNVP~\cite{dinh2016density} or Neural Spline Flows~\cite{durkan2019neural}.

During inference, a trained Boltzmann generator has unique advantages over classical Monte Carlo methods. It circumvents repeated evaluations of the potential gradient, by leveraging the high memory bandwidth of GPUs to instantly generate massive statistically independent samples. Most importantly, Boltzmann generators produce unbiased samples from the target distribution via importance sampling reweighting.

Although Boltzmann generators have been widely applied in computational physics~\cite{wirnsberger2020targeted,abbott2023normalizing,kohler2024transferable,coretti2024boltzmann,schebek2025scalable}, their reliability in rare event sampling depends heavily on the loss function. When the target distribution $\pi(x)$ has multiple metastable modes separated by high energy barriers, the pushforward samples of the trained flow often get trapped in a local basin, missing other significant modes. This phenomenon, known as \emph{mode collapse}~\cite{midgley2022flow, midgley2023learning}, remains the core challenge in designing efficient algorithms for rare event sampling.

\begin{figure}
    \centering
    \includegraphics[width=0.48\textwidth]{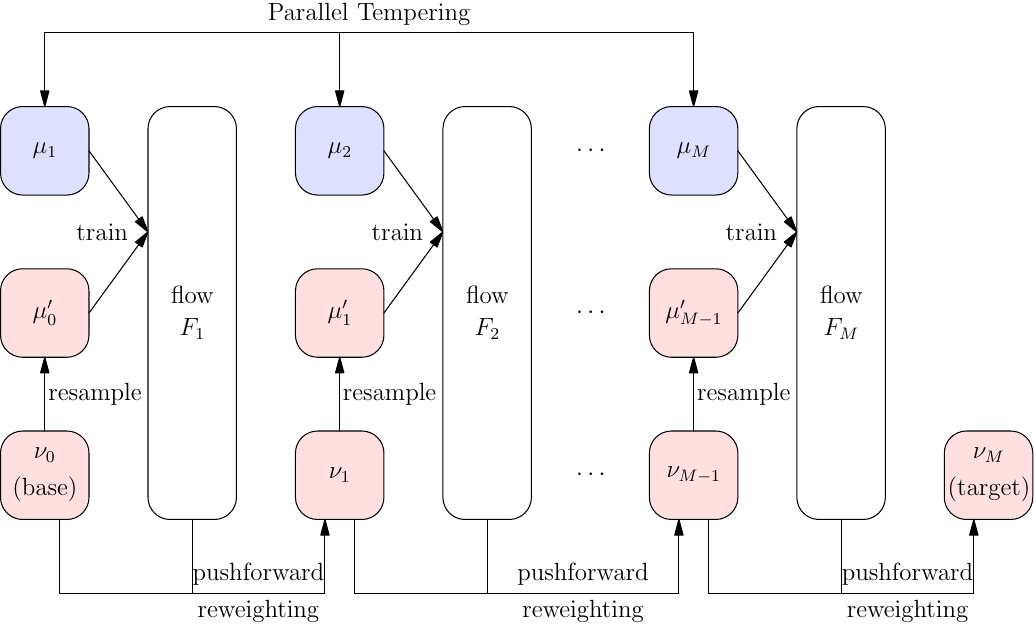}
    \caption{The overall architecture of the Jeffreys Flow. PT acts as the principal guide across the temperature ladder, providing reference samples ($\mu_1, \dots, \mu_M$) to train the sequence of normalizing flows ($F_1, F_2, \dots, F_M$). Simultaneously, intermediate states ($\nu_0, \nu_1, \dots, \nu_M$) are progressively pushed forward and reweighted, intrinsically averting mode collapse without relying solely on the target potential function.}
    \label{figure: architecture}
\end{figure}

We propose a new sampling strategy using the Jeffreys Flow, a Boltzmann generator specifically tailored for rare event sampling. As the name suggests, the Jeffreys Flow adopts the Jeffreys divergence (i.e., the symmetrized KL divergence) as its loss function. Because computing the forward KL divergence requires samples from the target distribution, our method utilizes reference samples generated via PT to train the normalizing flow. Thus, the Jeffreys Flow distills the sampling knowledge from PT. Like standard Boltzmann generators, the Jeffreys Flow maintains the benefits of generating statistically independent samples and allows unbiased reweighting. Most importantly, rather than forcing the flow to learn the complex landscape solely from the target potential, the Jeffreys Flow directly incorporates target samples, thereby impeding the severe mode collapse in rare event sampling. For visual overview of the Jeffreys Flow, see Figure~\ref{figure: architecture}.

While the original Boltzmann generator~\cite{noe2019boltzmann} also utilized the Jeffreys divergence, it did so only in the early stages of training with limited target samples. In contrast, our Jeffreys Flow leverages PT samples to guide the entire training pipeline. Although obtaining reference samples via PT incurs an upfront computational cost, these samples provide theoretical guarantees for training the flow across complex, multi-modal landscapes. Once the training is complete, the expensive PT simulation can be entirely discarded, allowing the trained flow to instantaneously generate statistically independent samples across the entire predefined temperature ladder in just a few computational steps.

Beyond its intuitive design, the Jeffreys Flow is supported by stringent theoretical confirmation. Theorem~\ref{theorem: correct} establishes that the pushforward distributions $(\nu_k)$ generated by minimizing the Jeffreys divergence are strictly closer to the target distribution than the empirical reference samples $(\mu_k)$, underscoring the capacity of the Jeffreys Flow to successively correct inaccuracies inherent in the PT samples. In addition, Theorem~\ref{theorem: concentration} derives a rigorous concentration inequality showing that the probability of mode collapse diminishes to an arbitrarily small level when the Jeffreys divergence is sufficiently minimized. Together, these analytical results confirm that the Jeffreys Flow provides a principled and direct mechanism to impede mode collapse.

In the numerical tests, we evaluate the performance of the Jeffreys Flow on benchmark multi-modal distributions up to 16 dimensions. Then we apply the Jeffreys Flow to two challenging applications: Replica Exchange Stochastic Gradient Langevin Dynamics (reSGLD)~\cite{deng2020accelerating,li2023fast,lin2023b} for finite-sum potentials, and Path Integral Monte Carlo (PIMC)~\cite{herman1982path,marx1996ab,ceriotti2010efficient,schoof2011configuration} for quantum thermal sampling. In the reSGLD, we utilize importance weights to correct the stochastic gradient bias and enhance target accuracy. In the PIMC, we generate PT samples from the cheap classical Boltzmann distribution instead of the expensive ring-polymer quantum equilibrium. A low-dimensional flow is then trained on only the low-frequency normal modes, while the full quantum potential is used solely for importance sampling reweighting to guarantee unbiasedness.

The paper is organized as follows. Section~\ref{section: related works} below reviews related flow-based sampling methods. Section~\ref{section: single step} formalizes the Jeffreys Flow in a single step and proves that the Jeffreys divergence mitigates mode collapse. Section~\ref{section: sequential distillation} explains the sequential distillation framework using Parallel Tempering samples. Sections~\ref{section: reSGLD} and \ref{section: PIMC} apply the framework to reSGLD and PIMC, respectively. Section~\ref{section: numerical experiments} presents validating numerical experiments, and Section~\ref{section: conclusion} concludes with future applications.

\section{Related Work}
\label{section: related works}

To mitigate mode collapse in Boltzmann generators, numerous methods modify the loss function away from the standard reverse KL. For instance, \cite{falkner2023conditioning} introduces a conditional reverse KL constrained by predefined reaction coordinates, while \cite{dibak2022temperature} trains a single reverse KL-based flow parameterized by continuous temperature. Adaptive annealing strategies are also common: \cite{wang2025mitigating} uses Effective Sample Size (ESS) to update the temperature ladder, and \cite{matthews2022continual} integrates the reverse KL into Sequential Monte Carlo (SMC) frameworks. Furthermore, \cite{qiu2024efficient} explicitly replaces the reverse KL with the $L^2$-Wasserstein distance between probability densities. While these approaches force the flow to learn the multi-modal target distribution (often relying heavily on ESS to control the tempering ladder), an alternative strategy replaces the reverse KL with the forward KL~\cite{gabrie2022adaptive, abernethy2024flow}. Although forward KL-based methods exhibit strong mode-covering behavior, they inherently lack the precise physical accuracy provided by energy-based objectives.

Ultimately, Parallel Tempering (PT)~\cite{earl2005parallel, syed2022non} remains the standard benchmark for rare event sampling. While it may require many replicas across different temperatures and lacks a bias-correction mechanism such as importance weighting, its effectiveness relies primarily on maintaining a replica exchange acceptance rate of approximately 20\% to 40\%. Consequently, if training a normalizing flow costs significantly more than running PT, such as when the network scales poorly with dimensionality or requires many flow steps, the generative approach loses its advantage.

Therefore, Jeffreys Flow does not replace PT. Instead, it leverages PT samples from both the base and target distributions to directly guide the training of the normalizing flow.
In machine learning, this distills the PT data to improve sampling efficiency and accuracy. The concept of distillation is common in generative modeling---particularly in image processing and large language models~\cite{luhman2021knowledge, zhou2024simple, xie2024distillation, fu2025moflow}---and often relies on Score-Based Diffusion~\cite{song2021maximum, batzolis2021conditional} or Flow Matching~\cite{lipman2022flow, chen2023flow}. By contrast, the Jeffreys Flow simply uses the Jeffreys divergence, the sum of the forward and reverse KL divergences, as its loss function.

Finally, we point out that the Jeffreys divergence has been well-established in statistical inference~\cite{moreno2003kullback, bayarri2008generalization, yao2011symmetric, sharma2021clustering} and image processing~\cite{nishii2006image, han2020active}. By symmetrizing the KL divergence, it establishes a stable metric that balances the local mode-seeking precision of the reverse KL with the global mode-covering properties of the forward KL, avoiding the asymmetry inherent in using either direction alone. Consequently, the Jeffreys divergence is an ideal loss function for rare event sampling.

\section{Single-Step Jeffreys Flow}
\label{section: single step}

The goal of a flow-based generative model is to train a normalizing flow $F:\mathbb{R}^d \to \mathbb{R}^d$ that maps the base distribution $\pi_0(x) \propto e^{-U_0(x)}$ to the target distribution $\pi_1(x) \propto e^{-U_1(x)}$, i.e., $F_{\#} \pi_0 = \pi_1$. Given access to the potential functions $U_0(x)$ and $U_1(x)$, along with empirical reference distributions $\mu_0$ and $\mu_1$ that approximate $\pi_0$ and $\pi_1$, we use the Jeffreys divergence to construct a robust training objective and prove it outperforms the standard forward and reverse KL divergences.

\subsection{Construction of Jeffreys Divergence}

To facilitate $F_{\#} \pi_0 = \pi_1$, the KL divergence is a natural choice for the loss function. We begin with the reverse KL divergence $D_{\KL}(F_\#\pi_0\|\pi_1)$, which requires the probability density of the pushforward distribution $F_{\#} \pi_0$:
\begin{equation}
	(F_\# \pi_0)(y) = \frac{\pi_0(x)}{|\det \nabla F(x)|},\quad y = F(x).
	\label{eq: Fpi density}
\end{equation}
Substituting \eqref{eq: Fpi density} into the KL divergence and omitting the normalizing constants yields
\begin{align*}
	D_{\KL}(F_\#\pi_0\|\pi_1) &= \mathbb E_{X_0 \sim \pi_0} \Big[ U_1(F(X_0)) - U_0(X_0) \\
	&\quad - \log|\det\nabla F(X_0)| \Big].
\end{align*}
By replacing the base distribution $\pi_0$ with its empirical approximation $\mu_0$, we derive the loss function
\begin{align}
	D_{\KL}(F_\#\pi_0\|\pi_1) &\approx \mathbb E_{X_0 \sim \mu_0} \Big[ U_1(F(X_0)) - U_0(X_0) \notag \\
	&\quad - \log|\det\nabla F(X_0)| \Big],
	\label{eq: rKL}
\end{align}
which requires no samples from the distribution $\pi_1$.

Similarly, we derive the forward KL divergence as
\begin{align}
	D_{\KL}(\pi_1\|F_\#\pi_0) 
	&\approx \mathbb E_{X_1 \sim \mu_1} \Big[ U_0(F^{-1}(X_1)) - U_1(X_1) \notag \\
	&\quad - \log|\det\nabla F^{-1}(X_1)| \Big],
	\label{eq: fKL}
\end{align}
As a consequence, we form the Jeffreys divergence as a weighted sum of the reverse and forward KL divergences:
\begin{equation}
	\mathcal L_J[F] = \lambda_0 D_{\KL}(F_\#\pi_0\|\pi_1) + \lambda_1 D_{\KL}(\pi_1\|F_\#\pi_0),
	\label{eq: Jdiv}
\end{equation}
where $\lambda_0$ and $\lambda_1$ are hyperparameters. 
The KL divergence components in \eqref{eq: Jdiv} can be replaced with the Rényi divergence $D_\alpha(P\|Q)$ to further enhance the robustness of the model against mode collapse. Unlike the standard KL divergence, which uses the logarithmic expectation $\mathbb E_P[\log(P/Q)]$, the Rényi divergence uses power-law moment $\mathbb E_P[(P/Q)^{\alpha-1}]$, recovering the KL divergence exactly at the limit $\alpha \to 1$. This scaling mechanism amplifies gradients for extreme particles situated in the isolated modes or distribution tails. Therefore, the Rényi objective requires abundant training samples to stabilize the high-variance weights, as it fosters a more comprehensive coverage of the complex multi-modal target landscape.

\subsection{Jeffreys Divergence Mitigates Mode Collapse}

The reverse KL divergence $D_{\KL}(F_\#\pi_0\|\pi_1)$ is capable of generating high-fidelity samples, but it can be prone to mode collapse. This critical limitation arises because the reverse KL is inherently mode-seeking; if the generated distribution $F_\#\pi_0$ captures only a subset of the modes of $\pi_1$ while ignoring the rest, the associated divergence penalty remains negligible.

In contrast, incorporating the forward KL effectively mitigates mode collapse. The divergence $D_{\KL}(\pi_1\|F_\#\pi_0)$ strongly penalizes missing mass, growing unbounded if the generated distribution $F_\#\pi_0$ fails to cover any region where  $\pi_1$ has significant mass. However, training a flow using only the forward KL has a key drawback: the fidelity of $F_\#\pi_0$ is limited by the accuracy of the empirical distribution $\mu_1$ relative to the true target $\pi_1$.

For theoretical analysis, we assume that the empirical base distribution is exact, i.e., $\mu_0 = \pi_0$, while $\mu_1$ remains an imperfect approximation of $\pi_1$.
Under this assumption, the Jeffreys divergence $\mathcal L_J[F]$ defined in \eqref{eq: Jdiv} evaluates exactly to
\begin{equation*}
	\mathcal L_J[F] = \lambda_0 D_{\KL}(F_\#\pi_0\|\pi_1) + 
	\lambda_1 \mathbb E_{X_1\sim \mu_1} 
	\bigg[\log\frac{\pi_1(X_1)}{F_\#\pi_0(X_1)}\bigg].
\end{equation*}
Denoting the pushforward density by $P(x) = (F_\#\pi_0)(x)$, we derive the simplified functional
\begin{equation*}
	\mathcal L_J[P] =  \lambda_0 \int P(x) \log \frac{P(x)}{\pi_1(x)}\D x + \lambda_1 \int \mu_1(x)\log \frac{\pi_1(x)}{P(x)}\D x.
\end{equation*}
Because $\int \mu_1(x) \log \pi_1(x)\D x$ is a constant independent of $P$, we can equivalently identify the functional
\begin{equation}
	\mathcal L_J[P] = \lambda_0 D_{\KL}(P\|\pi_1) + \lambda_1 D_{\KL}(\mu_1\|P).
	\label{eq: LJP}
\end{equation}

When $\lambda_0 = 0$, it is straightforward to observe that the unique global minimizer of $\mathcal L_J[P]$ is $P_*(x) = \mu_1(x)$. Consequently, a flow trained with the forward KL cannot surpass the quality of the empirical samples. Meanwhile, the Jeffreys divergence \eqref{eq: LJP} with $\lambda_0>0$ yields rigorous guarantees regarding the sample quality:
\begin{theorem}
\label{theorem: correct}
Assuming the empirical distribution $\mu_0 = \pi_0$, and weighting parameters satisfy $\lambda_0, \lambda_1 > 0$, the following statements hold true:
\begin{enumerate}
	\setlength{\itemsep}{4pt}
	\item The Jeffreys divergence $\mathcal L_J[P]$ in \eqref{eq: LJP} is strictly convex in the pushforward density $P$, ensuring the existence of a unique global minimizer $P_*$.
	\item The unique minimizer $P_*$ satisfies the divergence bound $D_{\KL}(P_*\|\pi_1 ) \Le D_{\KL}(\mu_1\|\pi_1)$.
\end{enumerate}
\end{theorem}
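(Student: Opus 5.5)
The plan treats $\mathcal L_J$ in \eqref{eq: LJP} as a functional on the convex set of probability densities $P$ on $\mathbb{R}^d$, ignoring the flow parametrization, as the statement does. Part 2 then follows from comparing $P_*$ with the single candidate $P=\mu_1$. We assume $D_{\KL}(\mu_1\|\pi_1)<\infty$; otherwise part 2 is vacuous.

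\textbf{Step 1: strict convexity.} Write
\[
\mathcal L_J[P]=\lambda_0\int \big(P\log P - P\log\pi_1\big)\,\D x-\lambda_1\int \mu_1\log P\,\D x+\lambda_1\int\mu_1\log\mu_1\,\D x .
\]
The map $p\mapsto p\log p$ is strictly convex on $[0,\infty)$. The term $-P\log\pi_1$ is linear in $P$, and $p\mapsto -\log p$ is convex. Hence for $P\neq Q$ and $t\in(0,1)$, the integrand of the first term is strictly convex pointwise wherever $P(x)\neq Q(x)$, which is a set of positive Lebesgue measure. Since $\lambda_0>0$, this gives $\mathcal L_J[tP+(1-t)Q]<t\mathcal L_J[P]+(1-t)\mathcal L_J[Q]$ whenever both values are finite. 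Strict convexity immediately gives uniqueness of any minimizer.

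\textbf{Step 2: existence, via an explicit Euler--Lagrange construction.} Introduce a Lagrange multiplier $c$ for the constraint $\int P=1$. The pointwise first-order condition is
\[
\phi_x(p):=\lambda_0\Big(\log\frac{p}{\pi_1(x)}+1\Big)-\lambda_1\frac{\mu_1(x)}{p}+c=0 .
\]
For each $x$, $\phi_x$ is strictly increasing in $p>0$, tends to $-\infty$ as $p\to0^+$, and tends to $+\infty$ as $p\to\infty$. So it has a unique root $p_c(x)>0$. This root is decreasing in $c$ and continuous, with $\int p_c\to\infty$ as $c\to-\infty$ and $\int p_c\to 0$ as $c\to+\infty$. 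The bound $p_c\le \max\{\pi_1 e^{-1-c/\lambda_0},\ \text{something controlled by }\mu_1\}$ gives integrability and permits dominated and monotone convergence. By the intermediate value theorem, some $c$ gives $\int p_c=1$; set $P_*=p_c$.

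To see that $P_*$ is the global minimizer, note that convexity of the integrand gives a supporting-hyperplane inequality for every admissible density $Q$:
\[
\mathcal L_J[Q]-\mathcal L_J[P_*]\ge\int \phi_x(P_*(x))\,(Q-P_*)\,\D x - c\int (Q-P_*)\,\D x=0 .
\]
\textbf{This is the step I expect to be the main obstacle.} It needs the integrability bookkeeping: the tail control on $p_c$ and justifying the supporting-hyperplane inequality when some terms are infinite. A fallback is the direct method: lower semicontinuity of both KL terms under weak convergence, plus tightness of a minimizing sequence from the bound $\lambda_0 D_{\KL}(P\|\pi_1)\le\mathcal L_J[P]$.

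\textbf{Step 3: the divergence bound.} Since $P_*$ minimizes $\mathcal L_J$ and $\mu_1$ is an admissible density,
\[
\lambda_0 D_{\KL}(P_*\|\pi_1)\le \mathcal L_J[P_*]\le \mathcal L_J[\mu_1]=\lambda_0 D_{\KL}(\mu_1\|\pi_1)+\lambda_1 D_{\KL}(\mu_1\|\mu_1)=\lambda_0D_{\KL}(\mu_1\|\pi_1).
\]
The first inequality uses $D_{\KL}(\mu_1\|P_*)\ge0$. Dividing by $\lambda_0>0$ yields $D_{\KL}(P_*\|\pi_1)\le D_{\KL}(\mu_1\|\pi_1)$. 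Equality forces $D_{\KL}(\mu_1\|P_*)=0$, i.e.\ $P_*=\mu_1$, and then $P_*=\mu_1=\pi_1$ by the Euler--Lagrange equation. So the inequality is strict whenever $\mu_1\neq\pi_1$.
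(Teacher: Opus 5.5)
Your proof is correct, and Steps 1 and 3 coincide with the paper's. The paper gets strict convexity from the second variation $\lambda_0/P+\lambda_1\mu_1/P^2>0$, which is the same fact as your pointwise strict convexity of the integrand. It proves part 2 by the same chain $\lambda_0 D_{\KL}(P_*\|\pi_1)\Le \mathcal L_J[P_*]\Le\mathcal L_J[\mu_1]=\lambda_0D_{\KL}(\mu_1\|\pi_1)$.

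The real difference is Step 2. The paper asserts that strict convexity ``ensures the existence'' of a minimizer, then writes down the Euler--Lagrange system without showing it can be solved. You note, correctly, that strict convexity only gives uniqueness. You then build $P_*$ by solving the pointwise equation, fix the multiplier by the intermediate value theorem, and confirm global minimality with the supporting-hyperplane inequality. This gives an actual existence proof. Your standing assumption $D_{\KL}(\mu_1\|\pi_1)<\infty$ also makes explicit something the paper leaves implicit: $\mu_1$ must be a true density, since for a literal empirical measure $D_{\KL}(\mu_1\|P)\equiv\infty$.

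The bookkeeping you flagged closes cleanly.
\begin{itemize}
\item Set $a=\pi_1e^{-1-c/\lambda_0}$. The root then satisfies $p_c\log(p_c/a)=\frac{\lambda_1}{\lambda_0}\mu_1$, so $a\Le p_c\Le a+\frac{\lambda_1}{\lambda_0}\mu_1$. Use a sum here: a max with $a$ itself fails where $0<\mu_1\ll a$, because $p_c\approx a+\frac{\lambda_1}{\lambda_0}\mu_1$ there.
\item The upper bound gives the domination needed for continuity of $c\mapsto\int p_c$ and for $\int p_c\to0$ as $c\to+\infty$. The lower bound gives $\int p_c\Ge e^{-1-c/\lambda_0}\to\infty$ as $c\to-\infty$.
\item Multiplying the root equation by $p_c$ and integrating gives $\lambda_0D_{\KL}(P_*\|\pi_1)=\lambda_1-\lambda_0-c$.
\item From $P_*\Ge a$ you get $D_{\KL}(\mu_1\|P_*)\Le D_{\KL}(\mu_1\|\pi_1)+1+c/\lambda_0$. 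Together with the previous item, this gives $\mathcal L_J[P_*]<\infty$.
\item The hyperplane inequality then integrates without $\infty-\infty$ issues if you first add $\lambda_0(\pi_1-P)+\lambda_1(P-\mu_1)$ to the integrand. This makes the integrand nonnegative and does not change $\mathcal L_J$ on probability densities.
\end{itemize}
Your equality-case refinement is also correct: the inequality is strict unless $\mu_1=\pi_1$. This goes slightly beyond the stated theorem.
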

\mbox{}
\begin{proof}
To prove the first claim, we begin by computing the second-order derivative of $\mathcal L_J[P]$ with respect to $P$:
\begin{equation*}
	\frac{\delta^2 \mathcal L_J}{\delta P^2}(x) = \frac{\lambda_0}{P(x)} + 
	\frac{\lambda_1 \mu_1(x)}{P^2(x)} > 0,
\end{equation*}
which immediately demonstrates that $\mathcal L_J[P]$ is strictly convex with respect to $P$. Therefore, $\mathcal L_J[P]$ admits a unique global minimizer $P_*$. Noting that the first-order functional derivative of $\mathcal L_J[P]$ is given by
\begin{equation*}
	\frac{\partial \mathcal L_J}{\partial P} = \lambda_0 \log \frac{P(x)}{\pi_1(x)}
	- \frac{\lambda_1 \mu_1(x)}{P(x)} + \mathrm{const.},
\end{equation*}
we find that the optimal density $P_*$ is determined by the system of equations
\begin{equation*}
\left\{
\begin{aligned}
	\lambda_0 \log \frac{P_*(x)}{\pi_1(x)}
	- \frac{\lambda_1 \mu_1(x)}{P_*(x)} & = c_*, \\
	\int P_*(x) \D x & = 1,
\end{aligned}
\right.
\end{equation*}
where $c_*\in\mathbb R$ represents a normalization constant. 

For the second claim, the optimality condition directly implies $\mathcal L_J[P_*] \Le \mathcal L_J[\mu_1]$. Substituting this into the alternative expression~\eqref{eq: LJP} gives
\begin{equation*}
\lambda_0 D_{\KL}(P_*\|\pi_1) + \lambda_1 D_{\KL}(\mu_1\|P_*) \Le 
\lambda_0 D_{\KL}(\mu_1\|\pi_1 ).
\end{equation*}
Since $D_{\KL}(\mu_1\|P_*)\Ge 0$, we obtain the bound
\begin{equation*}
	\lambda_0 D_{\KL}(P_*\|\pi_1) \Le 
	\lambda_0 D_{\KL}(\mu_1\|\pi_1 ),
\end{equation*}
yielding the desired inequality.
\end{proof}

Theorem~\ref{theorem: correct} ensures that the optimal pushforward distribution $P_*$ achieves a lower KL divergence than the empirical distribution $\mu_1$, allowing for a closer approximation to the target $\pi_1$. This distillation capability stems from the complementary loss terms: while the forward KL mitigates the mode collapse by anchoring the flow to the empirical modes of $\mu_1$, the reverse KL provides a physics-based regularizer. By penalizing deviations against the true potential $U_1$, the reverse KL refines the pushforward density and robustly corrects the bias from the reference data $\mu_1$.

Moreover, the Jeffreys divergence guarantees bounds on the likelihood ratio. Let $P$ and $Q$ be two probability measures on $\mathbb R^d$ with strictly positive densities $P(x)$ and $Q(x)$. We define the $\delta$-instability set $A_\delta$ as the region where the log-likelihood ratio exceeds a threshold $\delta$:
\begin{equation}
	A_\delta = \bigg\{x\in\mathbb R^d:
	\bigg|\log \frac{P(x)}{Q(x)}\bigg| \Ge \delta
	\bigg\}.
\end{equation}
The following lemma establishes that minimizing the Jeffreys divergence bounds the probability of $A_\delta$. For related probability bounds, see the density ratio estimation~\cite{sugiyama2012density} and the Bretagnolle--Huber inequality~\cite{tsybakov2008nonparametric}.
\begin{lemma}
	\label{lemma: concentration}
	Assume that there exists $\epsilon>0$ such that the distributions $P$ and $Q$ satisfy $D_{\KL}(P\|Q) \Le \epsilon$ and $D_{\KL}(Q\|P) \Le \epsilon$. Then the probability of $A_\delta$ is bounded by
	\begin{equation}
		\max\big(P(A_\delta),Q(A_\delta)\big) \Le \frac{2\ep}{\delta(1-e^{-\delta})}.
		\label{eq: mPQb}
	\end{equation}
\end{lemma}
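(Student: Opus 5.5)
The plan is to pass from the two one-sided bounds to the symmetrized Jeffreys divergence. That divergence has a \emph{pointwise nonnegative} integrand, so a Markov-type argument restricted to $A_\delta$ will give the bound.

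First, I will add the two hypotheses to obtain
\begin{equation*}
	D_{\KL}(P\|Q) + D_{\KL}(Q\|P) = \int \big(P(x)-Q(x)\big)\log\frac{P(x)}{Q(x)}\,\D x \Le 2\ep .
\end{equation*}
Write $r(x)=\log(P(x)/Q(x))$, so the integrand is $g(x)=(P-Q)\,r$. Since $P-Q$ and $r$ always share the same sign, $g\Ge 0$ everywhere. Discarding the integral over the complement of $A_\delta$ then gives $\int_{A_\delta} g\,\D x \Le 2\ep$.

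The key step is a pointwise lower bound on $A_\delta$:
\begin{equation*}
	g(x) \Ge \delta(1-e^{-\delta})\max\big(P(x),Q(x)\big).
\end{equation*}
I will split into two cases.
\begin{itemize}
	\item If $r\Ge\delta$, write $P-Q = P(1-e^{-r})$. Then $g = P\,r(1-e^{-r}) \Ge P\,\delta(1-e^{-\delta})$, because $t\mapsto t(1-e^{-t})$ is increasing on $t>0$. Alternatively, write $P-Q=Q(e^{r}-1)$. Then $g = Q\,r(e^r-1) \Ge Q\,\delta(e^\delta-1) \Ge Q\,\delta(1-e^{-\delta})$, since $e^\delta-1\Ge 1-e^{-\delta}$.
	\item If $r\Le-\delta$, the argument is symmetric with the roles of $P$ and $Q$ exchanged and $|r|$ in place of $r$.
\end{itemize}
In both cases $g$ dominates $\delta(1-e^{-\delta})P$ and $\delta(1-e^{-\delta})Q$ simultaneously.

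Finally, I will integrate this bound over $A_\delta$ to get $\delta(1-e^{-\delta})\,P(A_\delta)\Le 2\ep$, and likewise for $Q(A_\delta)$. This gives \eqref{eq: mPQb}.

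The main obstacle is finding the right factorization of $P-Q$ in each case so that the resulting constant is the same for both $P$ and $Q$. The other factorization yields the larger constant $e^\delta-1$, which must be weakened to $1-e^{-\delta}$. Everything else, namely symmetrization and the Markov-style restriction, is routine. The strict positivity of the densities ensures $r$ is finite, so the integrals and $A_\delta$ are well defined.
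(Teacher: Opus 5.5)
Your proposal is correct and follows essentially the same route as the paper: add the two KL bounds to get the nonnegative Jeffreys integrand, split $A_\delta$ by the sign of $\log(P/Q)$, and use $g \ge \delta(1-e^{-\delta})P$ on $\{r \ge \delta\}$ (symmetrically on $\{r\le -\delta\}$), which is exactly the paper's $I_\pm$ argument. The only difference is cosmetic. The paper gets the $Q$-bound on $S_+$ directly from $Q \le P$ there, i.e.\ $\max(P,Q)=P$ on that set, so your second factorization with $e^{\delta}-1$ is correct but unnecessary.
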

\begin{proof}
By definition, the KL divergence bounds yield
\begin{equation}
\int_{\mathbb R^d} (P(x) - Q(x))\log\frac{P(x)}{Q(x)} \D x \Le 2\epsilon.
\label{eq: PQep}
\end{equation}
Note that the integrand in \eqref{eq: PQep} is non-negative. We partition the instability set $A_\delta$ into
\begin{equation*}
	S_+ = \{P(x) \Ge e^\delta Q(x)\}, \quad S_- = \{Q(x) \Ge e^\delta P(x)\},
\end{equation*}
and define $I_+$ and $I_-$ to be their respective contributions to the integral in \eqref{eq: PQep}. Then \eqref{eq: PQep} implies $I_++I_-\Le 2\epsilon$.

On $S_+$, we have $\log\frac{P(x)}{Q(x)} \Ge \delta$ and
\begin{equation*}
	I_+ \Ge \int_{S_+} P(x)(1-e^{-\delta})\delta \D x = \delta(1-e^{-\delta})P(S_+).
\end{equation*}
Since $Q(x) \Le P(x)$ on $S_+$, we find
\begin{equation}
	Q(S_+) \Le P(S_+) \Le \frac{I_+}{\delta(1-e^{-\delta})}.
	\label{eq: QPd}
\end{equation}
By symmetry, on $S_-$ we obtain
\begin{equation}
	P(S_-) \Le Q(S_-) \Le \frac{I_-}{\delta(1-e^{-\delta})}.
	\label{eq: PQd}
\end{equation}
Since $I_++I_-\Le 2\ep$, adding \eqref{eq: QPd} and \eqref{eq: PQd} yields \eqref{eq: mPQb}.
\end{proof}

Applying Lemma~\ref{lemma: concentration} to our context where $P = F_\# \pi_0$ and $Q = \pi_1$ yields our main theoretical result for the pushforward distribution $F_\#\pi_0$.

\begin{theorem}
\label{theorem: concentration}
Given the distributions $\pi_0$ and $\pi_1$,
suppose $F$ satisfies $D_{\KL}(F_\#\pi_0\|\pi_1) \Le \epsilon$ and $D_{\KL}(\pi_1\|F_\#\pi_0) \Le \epsilon$. For any $\delta > 0$, the probability that the log-likelihood ratio deviates more than $\delta$ is bounded by:
\begin{align}
	\mathbb{P}_{X\sim F_\#\pi_0} \bigg( \bigg| \log \frac{F_\#\pi_0(X)}{\pi_1(X)} \bigg| & \Ge \delta \bigg) 
	\Le \frac{2\epsilon}{\delta(1 - e^{-\delta})},	\\
 \mathbb{P}_{Y\sim \pi_1} \bigg( \bigg| \log \frac{F_\#\pi_0(Y)}{\pi_1(Y)} \bigg| & \Ge \delta \bigg) \Le \frac{2\epsilon}{\delta(1 - e^{-\delta})}.
\end{align}
\end{theorem}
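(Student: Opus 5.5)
This is a direct specialization of Lemma~\ref{lemma: concentration}, so the plan is to check its hypotheses and translate its conclusion. Take $P = F_\#\pi_0$ and $Q = \pi_1$.

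\textbf{Positivity of the densities.} The lemma requires strictly positive densities. For $Q=\pi_1\propto e^{-U_1}$ this holds because $U_1$ is finite everywhere. For $P$, use the change-of-variables formula~\eqref{eq: Fpi density}: $(F_\#\pi_0)(y)=\pi_0(x)/|\det\nabla F(x)|$ with $y=F(x)$. Since $\pi_0\propto e^{-U_0}>0$ and $F$ is an invertible flow, the Jacobian determinant is finite and nonzero, and every $y$ has a preimage. Hence $P>0$ everywhere. This is the only point needing care: one must use that $F$ is a diffeomorphism of $\mathbb R^d$ onto $\mathbb R^d$. I expect it to be routine rather than a real obstacle.

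\textbf{KL bounds.} The assumptions give $D_{\KL}(P\|Q)\Le\epsilon$ and $D_{\KL}(Q\|P)\Le\epsilon$, which are exactly the hypotheses of Lemma~\ref{lemma: concentration}. Applying the lemma yields
\[
\max\big(P(A_\delta),Q(A_\delta)\big)\Le \frac{2\epsilon}{\delta(1-e^{-\delta})}.
\]

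\textbf{Translating the conclusion.} Identify the two probabilities in the statement with the masses of $A_\delta$. We have $P(A_\delta)=\mathbb P_{X\sim F_\#\pi_0}\big(|\log(F_\#\pi_0(X)/\pi_1(X))|\Ge\delta\big)$, and $Q(A_\delta)$ is the same event with $Y\sim\pi_1$. Each is bounded by the maximum above, which gives both displayed inequalities.

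If more care is desired, note that $A_\delta$ is Borel measurable. This holds because it is a superlevel set of $|\log(P/Q)|$, which is continuous when $U_0,U_1$ and $F$ are continuous, and is at least measurable in general.
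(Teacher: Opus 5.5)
Your proposal is correct and follows the paper, which proves Theorem~\ref{theorem: concentration} by applying Lemma~\ref{lemma: concentration} with $P = F_\#\pi_0$ and $Q = \pi_1$ and identifying $P(A_\delta)$ and $Q(A_\delta)$ with the two displayed probabilities. Your positivity and measurability checks are details the paper leaves implicit. Strict positivity is not really needed: finiteness of both KL divergences already gives mutual absolute continuity, so the log-ratio is defined almost everywhere under both measures.
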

Theorem~\ref{theorem: concentration} guarantees that with high probability, the density ratio of generated distribution $F_\#\pi_0$ and the target $\pi_1$ is bounded. Consequently, this prevents mode collapse (where $\pi_1 \gg F_\#\pi_0$) and spurious mode generation (where $F_\#\pi_0 \gg \pi_1$) as the training error $\epsilon \to 0$.

\mbox{}

\subsection{Unbiased Estimation via Importance Sampling}

In practice, the trained flow $F$ rarely yields $F_\#\pi_0 = \pi_1$ exactly. Nevertheless, unbiased samples from $\pi_1$ can be recovered via importance sampling, provided that $F_\#\pi_0$ covers the entire support of $\pi_1$ without mode collapse.

To construct the importance weights, we first recall the change-of-variable formula for the pushforward density
\begin{equation*}
	(F_\#\pi_0)(y) = \frac{\pi_0(x)}{|\det \nabla F(x)|}, \quad y =F(x).
\end{equation*}
Consequently, we define the unnormalized importance weight $w(y)$ as the likelihood ratio between the target and the generated distributions:
\begin{equation}
	w(y) = \frac{\pi_1(y)}{(F_\#\pi_0)(y)} \propto e^{- U_1(y) + U_0(x) + \log|\det \nabla F(x)|}.
\end{equation}
Notice that $w(y)$ corresponds exactly to the likelihood ratio whose stability is rigorously bounded in Theorem~\ref{theorem: concentration}. If the pushforward density $F_\#\pi_0$ misses regions where $\pi_1$ has significant mass, the corresponding weights $w(y)$ inevitably diverge or exhibit unbounded variance.

For notational convenience, we abstract the single-step Jeffreys Flow operation as:
\begin{equation*}
(F, w) = \mathtt{JeffreysFlow}(\mu_0 , \mu_1 ; U_0 , U_1 ).
\end{equation*}
This notation encapsulates the entire procedure: training the transport map $F$ using the empirical
samples $\mu_0$, $\mu_1$ and potential functions $U_0$, $U_1$, and subsequently computing the importance weights $w$. In particular, we evaluate the flow's quality using the Conditional Effective Sample Size (CESS):
\begin{equation}
	\CESS(F,w) = \frac{\big(\int w(y) \mu_0(F^{-1}(y))\D y\big)^2}{\int w^2(y) \mu_0(F^{-1}(y))\D y},
\end{equation}
which lies in $[0,1]$ due to Cauchy--Schwarz inequality. CESS also serves as the standard metric in SMC~\cite{doucet2001introduction} for evaluating sample degeneracy, where a value approaching $1$ indicates high sample quality.

For a discrete empirical distribution
\begin{equation*}
	\mu_0(x) = \sum_{i=1}^N \alpha_i \delta(x-x_i),\quad 
	\text{where} ~
	\sum_{i=1}^N \alpha_i = 1,
\end{equation*}
its Effective Sample Size (ESS) is defined as
\begin{equation}
	\ESS(\mu_0) = \frac{\big(\sum_{i=1}^N \alpha_i\big)^2}{N\sum_{i=1}^N \alpha_i^2} \Le 1.
\end{equation}
In the specific case where $\mu_0$ has uniform weights
\begin{equation*}
	\mu_0(x) = \frac{1}{N} \sum_{i=1}^N \delta(x-x_i),
\end{equation*}
and $\mu_1 = F_\# \mu_0$, the CESS of the flow $F$ simplifies to
\begin{equation*}
	\CESS(F,w) = \frac{\big(\sum_{i=1}^N w(y_i)\big)^2}{N\sum_{i=1}^N w^2(y_i)},\quad \text{with} ~ y_i = F(x_i),
\end{equation*}
recovering $\ESS(F_\# \mu_0)$.

\subsection{Choice of Hyperparameters \(\lambda_0, \lambda_1\)}

In the Jeffreys divergence in \eqref{eq: Jdiv}, we choose the parameters $\lambda_0$ and $\lambda_1$ by the variance of distributions:
\begin{equation}
	\lambda_0 = \theta \Var(\mu_0), \quad \lambda_1 = (1 - \theta) \Var(\mu_1),
	\label{eq: lambda 01}
\end{equation}
where $\Var(\cdot)$ denotes the variance (trace of covariance matrix), and $\theta \in [0, 1]$ is a balancing hyperparameter.

While the algorithm is generally robust to the choice of $\theta$, extreme values degrade performance: $\theta$ close to $1$ reduces to the reverse KL and can induce mode collapse, while $\theta$ close to $0$ reduces to the forward KL and produces diffused samples with low ESS (see Section~\ref{subsection: single step transport}). In practice, we select $\theta$ by monitoring the CESS; alternatively, using the R\'enyi divergence to achieve stronger geometric constraints for mode-covering.

\section{Sequential Distillation from Parallel Tempering}
\label{section: sequential distillation}

\subsection{Reference Samples from Parallel Tempering}

Given a base distribution $\pi_0(x) \propto e^{-U_0(x)}$ such as a Gaussian or uniform distribution, we aim to sample from the multi-modal target $\pi_M(x) \propto e^{-U_M(x)}$. Directly training a deterministic flow $F$ to map $\pi_0$ to $\pi_M$ is often infeasible, as it requires $F$ to learn complex deformations. Moreover, constrained by the topological structure of diffeomorphisms, deterministic flows are prone to establishing artificial bridges between well-separated local modes.

To prevent this, Boltzmann generators \cite{noe2019boltzmann} employ a temperature ladder to bridge the base and target distributions. Let $M$ be the total number of transitions, and consider the interpolated potential sequence
\begin{equation}
	U_k(x) = \lambda_k U_M(x) + (1-\lambda_k) U_0(x), \quad 0\Le k\Le M,
\end{equation}
where the pacing parameters $\{\lambda_k\}_{k=0}^M$ satisfy
\begin{equation*}
	0 = \lambda_0 < \lambda_1 < \cdots < \lambda_M = 1.
\end{equation*}
This sequence induces $M-1$ intermediate distributions $\pi_k(x) \propto e^{-U_k(x)}$ from high to low temperatures, and decomposes the generation process into $M$ localized flow transformations $F_k$, each satisfying $(F_k)_\# \pi_{k-1} = \pi_k$. To further avoid the spurious mode connections, SNF \cite{wu2020stochastic} interleaves stochastic diffusion steps at each flow layer.

In contrast to standard Boltzmann generators based on reverse KL divergence, the Jeffreys Flow requires reference samples for each intermediate distribution $\pi_k$: training $F_k$ requires approximate samples from both $\pi_{k-1}$ and $\pi_k$. In this case, PT \cite{syed2022non} emerges as a natural choice to generate the required training data.

Over the family of distributions $\{\pi_k\}_{k=0}^M$, PT simulates $M+1$ overdamped Langevin dynamics in parallel:
\begin{equation*}
	\D X_k(t) = -\nabla U_k(X_k(t))\D t + \sqrt{2}\D B_k(t),
	\quad 0\Le k\Le M,
\end{equation*}
where $\{B_k(t)\}_{k=0}^M$ are independent Brownian motions in $\mathbb R^d$. 
Although each process $X_k(t)$ admits $\pi_k(x) \propto e^{-U_k(x)}$ as its invariant distribution,
sampling $\pi_k$ for large $k$ suffers from slow mixing times due to high energy barriers.
To ensure global ergodicity, PT proposes configuration exchanges between adjacent replicas $X_{k-1}(t) = x_{k-1}$ and $X_k(t) = x_k$. The energy difference associated with this proposed swap is given by
\begin{equation*}
	\Delta H_k = U_{k-1}(x_k) + U_k(x_{k-1}) - U_{k-1}(x_{k-1}) - U_k(x_k).
\end{equation*}
To preserve the detailed balance condition, this proposed exchange is accepted with the Metropolis--Hastings probability $\min\{1, e^{-\Delta H_k}\}$.

In general, PT simulations require all $M+1$ processes $\{X_k(t)\}_{k=0}^M$ simultaneously, and high-accuracy samples usually demand a very small time step. In long-time simulations, this can impose a significant CPU and memory burden. However, in the Jeffreys Flow, only low-fidelity samples that cover all modes are needed to be generated by PT. Once the flow models $\{F_k\}_{k=1}^M$ are trained on these samples, we can utilize the flows to instantly generate a large number of accurate samples from the target distribution $\pi_M$. Notably, monitoring the sampling quality in PT is straightforward: standard practice dictates that an average acceptance rate of 20\%--40\% is typically sufficient to ensure global ergodicity \cite{syed2022non}.

\subsection{Sequential Flow Training Framework}

Next, our goal is to learn a sequence of flows $\{F_k\}_{k=1}^M$ such that $(F_k)_\# \pi_{k-1} = \pi_k$. Assuming PT generates an approximate empirical distribution $\mu_k$ for each $\pi_k$, a natural approach is to use $\mu_{k-1}$ and $\mu_k$ to train the flow $F_k$:
\begin{equation*}
	(F_k, w_k) = \mathtt{JeffreysFlow}(\mu_{k-1} , \mu_k ; U_{k-1} , U_k ).
\end{equation*}
However, because the empirical distributions $\mu_{k-1}$ and $\mu_k$ are inherently biased, the resulting $F_k$ may deviate from the true minimizer of the Jeffreys divergence, leading to a low CESS in the distillation steps.

To address this issue, we employ a more robust sequential training strategy, as described in Fig.~\ref{figure: architecture}. The distributions $\{\mu_k\}_{k=0}^M$ and $\{\nu_k\}_{k=0}^M$ represent the training and validation samples, respectively. We initialize a large ensemble $\nu_0$
from the base distribution $\pi_0\propto e^{-U_0}$. At each step $k=1,\dots,M$, we construct a refined empirical distribution $\mu_{k-1}'$ by resampling from the large ensemble $\nu_{k-1}$ and train the flow $F_k$ via
\begin{equation*}
	(F_k, w_k) = \mathtt{JeffreysFlow}(\mu_{k-1}' , \mu_k ; U_{k-1} , U_k ).
\end{equation*}
After training, we push forward $\nu_{k-1}$ using $(F_k, w_k)$ to obtain the updated distribution $\nu_k$. Because of the importance weights, each $\nu_k$ provides an unbiased estimate of $\pi_k$. Thus, the final output $\nu_M$ accurately approximates the target distribution $\pi_M$.

\begin{algorithm*}
\label{algorithm}
\setstretch{1.15}
\caption{Jeffreys Flow: Sequential Distillation from Parallel Tempering}
\KwIn{base potential $U_0$, target potential $U_M$, interpolating parameters $\{\lambda_k\}_{k=0}^M$, reference samples $\{\mu_k\}_{k=0}^M$}
\KwOut{trained flow models $\{F_k\}_{k=0}^M$, high-fidelity output samples $\{\nu_k\}_{k=0}^M$}
\tcp{A. Initialization}
Generate samples $\nu_0$ from the base distribution $\pi_0\propto e^{-U_0}$ with uniform weights.

\tcp{B. Flow Distillation}
\For{$k=1,\dots,M$}{
	\tcp{I. Data Preparation}
	Set the intermediate potential $U_k(x) = \lambda_k U_M(x) + (1-\lambda_k) U_0(x)$. \\
	Resample the ensemble $\mu_{k-1}'$ from the source distribution $\nu_{k-1}$. \\
	\emph{(Optional)} Apply Monte Carlo rejuvenation on $\mu_{k-1}'$ to update the positions. \\
	\tcp{II. Flow Optimization}
	Compute the balancing parameters $\lambda_0 = \Var(\mu_{k-1}')$ and $\lambda_1 = \Var(\mu_k)$. \\
	Train the flow $F_k$ and compute the weights $w_k$ by minimizing the Jeffreys divergence defined in \eqref{eq: rKL}--\eqref{eq: Jdiv}:
	\begin{equation*}
		(F_k, w_k) = \mathtt{JeffreysFlow}(\mu_{k-1}' , \mu_k ; U_{k-1} , U_k ).
	\end{equation*}
	
	\tcp{III. Propagation and Reweighting}
	Compute the pushforward and reweighted distribution $\nu_k$:
	\begin{equation*}
		\nu_k = (F_k, w_k)_\# \nu_{k-1}.
	\end{equation*}
	
	\tcp{IV. Adaptive Resampling for low ESS}
	\If{$\ESS(\nu_k)<50\%$}{
		Resample the distribution $\nu_k$ from itself to restore uniform weights.
	}
	\emph{(Optional)} Apply Monte Carlo rejuvenation on $\nu_k$ to update the positions.
}

\tcp{C. Output Results}
Output the trained flow models $\{F_k\}_{k=0}^M$ and the samples $\{\nu_k\}_{k=0}^M$.
\end{algorithm*}

In practice, the sample size of the output ensemble $\nu_k$ can and should be significantly larger than that of the training ensemble $\mu_k$. On the one hand, we require $\nu_k$ to achieve higher statistical accuracy. On the other hand, evaluating the flow pushforward $F_k$ on $\nu_k$ during inference is computationally much cheaper than performing repeated backpropagation on $\mu_k$ during training. Thus, it is computationally reasonable to use this strategy.

We present the full pipeline of the Jeffreys Flow in Algorithm~\ref{algorithm}. Beyond the core distillation procedure, two optional enhancements are included. First, following the Stochastic Normalizing Flow \cite{wu2020stochastic}, rejuvenation steps can be applied at each step $k$: a few iterations of the Metropolis-Adjusted Langevin Algorithm (MALA) on $\nu_k$ with respect to $\pi_k$ help eliminate artificial bridges connecting different modes (see Section~\ref{subsection: numerical reSGLD}). Second, an adaptive resampling strategy forces the weighted distribution $\nu_k$ to be resampled whenever its ESS drops below $50\%$, preventing weight degeneracy.

\section{Applications in Replica Exchange SGLD}
\label{section: reSGLD}

Many sampling tasks, particularly in Bayesian inference, involve a potential energy function with a finite-sum structure:
\begin{equation}
	U_M(x) = \frac{1}{L} \sum_{i=1}^L V^i(x),
\end{equation}
where $\{V^i(x)\}_{i=1}^L$ is a collection of component potentials in $\mathbb{R}^d$. When the target distribution $\pi_M \propto e^{-U_M}$ is multi-modal, designing an efficient and accurate sampling algorithm poses a significant challenge.

The Replica Exchange Stochastic Gradient Langevin Dynamics (reSGLD)~\cite{deng2020accelerating} offers a scalable framework to sample such distributions. To reduce the computational cost of evaluating full gradients, reSGLD employs a mini-batch potential for the PT simulation:
\begin{equation}
	U_k^{\mathcal B}(x) = (1-\lambda_k) U_0(x) + \frac{\lambda_k}{|\mathcal B|} \sum_{i\in\mathcal B} V^i(x),
\end{equation}
where the subset $\mathcal B \subset \{1,\dots,L\}$ is shared across all temperature indices $k=0,1,\dots,M$. Consequently, the Langevin dynamics is replaced by its stochastic variant:
\begin{equation*}
    X_k(t + h) = X_k(t) - h \nabla U_k^{\mathcal B}(X_k(t)) + \sqrt{2h}(B_k(t + h) - B_k(t)),
\end{equation*}
where $\{B_k(t)\}_{k=0}^M$ are independent Brownian motions in $\mathbb{R}^d$. Since the stochastic gradient is unbiased, this numerical scheme is exact in the continuous-time limit $h \to 0$.

However, computing the swapping rate between adjacent replicas presents a core challenge. Using the mini-batch approximation, the stochastic energy difference for swapping configurations $x_{k-1}$ and $x_k$ at pacing parameters $\lambda_{k-1}$ and $\lambda_k$ is
\begin{equation*}
	\Delta H_k^{\mathcal B} = U_{k-1}^{\mathcal B}(x_k) + U_k^{\mathcal B}(x_{k-1}) - U_{k-1}^{\mathcal B}(x_{k-1}) - U_k^{\mathcal B}(x_k),
\end{equation*}
yielding the acceptance probability
\begin{equation}
\alpha_k^{\mathcal B} = \min\big\{1, e^{-\Delta H_k^{\mathcal B}}\big\}.
\label{eq: akB}
\end{equation}
Due to the nonlinearity of the exponential function, \eqref{eq: akB} yields a biased estimator of the true swapping probability, even if $\Delta H_k^{\mathcal B}$ itself remains unbiased. Variance correction techniques \cite{deng2020accelerating} can mitigate this bias, but we do not employ them here for simplicity.

Although reSGLD can generate the reference sample $\{\mu_k\}_{k=0}^M$ analogous to standard PT, it inherently sacrifices theoretical exactness in computing the acceptance probability, even in the limit $h \to 0$. By contrast, the Jeffreys Flow effectively circumvents this limitation---the Jeffreys divergence for training the map $F_k$, defined in \eqref{eq: rKL}--\eqref{eq: Jdiv}, depends linearly on the potential functions $U_{k-1}$ and $U_k$. Consequently, Jeffreys Flow theoretically guarantees the exact transport map $F_k$, requiring however additional training epochs to average out the variance.

After training $F_k$, computing the importance weights using the full exact potential is highly affordable, as it requires only a single evaluation across the ensemble $\nu_{k-1}$. In contrast, reSGLD demands frequent replica swapping to maintain ergodicity, hence a mini-batch approximation to the energy difference remains essential for efficiency. Furthermore, the Monte Carlo rejuvenation steps in Algorithm~\ref{algorithm} can be implemented via a Stochastic Variance-Reduced Gradient (SVRG) \cite{johnson2013accelerating,reddi2016stochastic} strategy. By initially evaluating the full gradient for the entire ensemble $\nu_k$, this variance reduction technique yields highly accurate gradients, enabling us to safely simulate the Langevin dynamics without Metropolis rejection steps.

Even when evaluating the full potential is computationally prohibitive, we can still perform accurate importance sampling via a residual matching technique. Specifically, we train auxiliary neural networks $\phi$ and $\psi$ by minimizing the $L^2$ loss:
\begin{align*}
    \mathcal{L}_2[\phi, \psi] & = \mathbb{E}_{X_0 \sim \nu_{k-1}} \Big[ \big( U_k^{\mathcal{B}}(F_k(X_0)) - U_{k-1}^{\mathcal{B}}(X_0) \\
    & \hspace{2cm} + \phi(X_0) - \psi(F_k(X_0)) \big)^2 \Big],
\end{align*}
where $U_k^{\mathcal{B}}$ and $U_{k-1}^{\mathcal{B}}$ denote the strictly unbiased mini-batch approximations of the potentials. Upon convergence, we utilize $\psi(y) - \phi(x) - \log |\det \nabla F_k(x)|$ as a surrogate for the intractable log-importance weights $\log w_k(y)$. This surrogate achieves high accuracy because the $L^2$ residual loss provides a significantly finer resolution than the standard KL divergence.

\section{Applications in Path Integral Monte Carlo}
\label{section: PIMC}

\subsection{Path Integral Formulation}

In Path Integral Monte Carlo (PIMC), the primary objective is to compute quantum thermal averages by sampling a quantum canonical ensemble. Under the path integral formulation~\cite{reed1972methods}, this ensemble is mathematically equivalent to a continuous probability distribution over an infinite-dimensional space of imaginary-time paths.

To formalize this, consider a quantum system in $\mathbb{R}^d$ governed by the Hamiltonian operator:
\begin{equation}
    \hat{H} = -\frac{1}{2} \Delta + V(x),
\end{equation}
where $\Delta$ is the Laplacian and $V: \mathbb{R}^d \to \mathbb{R}$ is the potential energy function. For a system at inverse temperature $\beta$, the density matrix of the thermal Boltzmann ensemble $e^{-\beta \hat{H}}$ exactly corresponds to a probability measure on the space of closed continuous paths $x: [0, \beta] \to \mathbb{R}^d$. This measure is governed by the Euclidean action functional:
\begin{equation}
    U(x(\cdot)) = \int_0^\beta \left( \frac{1}{2} |\dot{x}(t)|^2 + V(x(t)) \right) \D t.
\end{equation}

Here, the kinetic energy term $\frac{1}{2} |\dot{x}(t)|^2$ induces a Gaussian reference measure tying the path into a closed ring, while the potential $V(x(t))$ modifies the local density. The target distribution is thus formally given by:
\begin{equation*}
    \pi(x(\cdot)) \propto e^{-U(x(\cdot))}.
\end{equation*}
Generating samples from this infinite-dimensional measure provides a direct and exact framework for evaluating macroscopic thermodynamic properties.

To computationally sample the infinite-dimensional measure $\pi(x(\cdot))$, we construct a finite-dimensional approximation. By discretizing the imaginary-time interval $[0, \beta]$ into $N$ equal segments, we represent the continuous path as a discrete sequence of beads $x_1, \dots, x_N \in \mathbb{R}^d$ subject to the periodic boundary condition $x_{N+1} = x_1$. The Euclidean action functional is thus approximated by the discrete ring-polymer potential:
\begin{equation}
    U_N(x_{1:N}) = \frac{N}{2\beta} \sum_{j=1}^N |x_j - x_{j+1}|^2 + \frac{\beta}{N} \sum_{j=1}^N V(x_j),
    \label{eq: UNx}
\end{equation}
and the discretized target distribution is now:
\begin{equation*}
	\pi_N(x_{1:N})\propto \exp\big(-U_N(x_{1:N})\big).
\end{equation*}
To decouple the stiff harmonic interactions between adjacent beads, we assume $N$ is even and apply a discrete Fourier transform. This maps the Cartesian coordinates $\{x_k\}_{k=1}^N$ into the normal mode coordinates $\{\xi_k\}_{k=0}^{N-1}$:
\begin{equation}
    \xi_k = \frac{\beta}{N} \sum_{j=1}^N x_j c_{j,k}, \quad k = 0, 1, \dots, N-1.
\end{equation}
Here, the orthogonal transformation matrix elements $c_{j,k}$ are explicitly given by:
\begin{align*}
    c_{j,0} &= \frac{1}{\sqrt{\beta}}, \quad c_{j,N-1} = \frac{(-1)^j}{\sqrt{\beta}}, \\
    c_{j,2k-1} &= \sqrt{\frac{2}{\beta}} \sin\left(\frac{2\pi k j}{N}\right), \quad
    c_{j,2k} = \sqrt{\frac{2}{\beta}} \cos\left(\frac{2\pi k j}{N}\right), \\
    & \text{for } k = 1, \dots, \frac{N}{2} - 1;
\end{align*}
and they rigorously satisfy orthogonality:
\begin{equation*}
    \frac{\beta}{N} \sum_{j=1}^N c_{j,k} c_{j,k'} = \delta_{k,k'}.
\end{equation*}

By transitioning to this normal mode basis, the stiff harmonic interaction term is exactly diagonalized:
\begin{equation}
    \frac{N}{2\beta} \sum_{j=1}^N |x_j - x_{j+1}|^2 = \frac{1}{2} \sum_{k=0}^{N-1} \omega_k^2 |\xi_k|^2,
\end{equation}
where the intrinsic ring-polymer frequencies $\{\omega_k\}_{k=0}^{N-1}$ governing each Fourier mode are analytically defined by:
\begin{equation*}
    \omega_0 = 0, \quad \omega_{2k-1} = \omega_{2k} = \frac{2N}{\beta} \sin\left(\frac{k\pi}{N}\right).
\end{equation*}
Consequently, $U_N(x_{1:N})$ in \eqref{eq: UNx} can be equivalently expressed in the normal mode coordinates as:
\begin{equation*}
    U_N(\xi_{0:N-1}) = \frac{1}{2} \sum_{k=0}^{N-1} \omega_k^2 |\xi_k|^2 + \frac{\beta}{N} \sum_{j=1}^N V\bigg( \sum_{k=0}^{N-1} \xi_k c_{j,k} \bigg).
\end{equation*}

By sampling these finite normal modes $\xi_{0:N-1}$, we systematically construct a continuous function that serves as an accurate approximation to the infinite-dimensional statistical measure. Due to the symmetric nature of the Lie--Trotter splitting applied in the potential discretization, this Fourier representation achieves a numerical convergence bounded by $O(1/N^2)$ \cite{ye2023optimal}.

\subsection{Physics-Informed Mode Truncation}

A fundamental challenge in this framework is that achieving high-accuracy simulations demands a very large number of modes $N$. Simultaneously, standard PT necessitates multiple replicas across the temperature ladder. For high-dimensional target potentials, this multiplicative expansion of the state space results in severe memory overhead and prohibitive computational costs. This dimensionality bottleneck also identically afflicts flow-based generation, where the cost of training a normalizing flow scales poorly---often exponentially---with the number of modes $N$.

Fortunately, this limitation can be effectively mitigated through a physics-informed mode truncation technique. Rather than training a flow over all $N$ modes, we minimize the Jeffreys divergence exclusively for the $N_0$ low-frequency modes $\{\xi_k\}_{k=0}^{N_0-1}$, governed by $U_{N_0}(\xi_{0:N_0-1})$. For the high-frequency modes, the flow simply applies an identity mapping. Because the low-frequency modes govern the macroscopic topology of the ring polymer, this truncated flow is sufficient to capture the essential geometry of the target distribution. The residual discretization errors are subsequently rigorously corrected via importance sampling using the full potential $U_N(\xi_{1:N})$.

Furthermore, the PT simulation can be implemented highly efficiently. We can approximate the target potential $U_N(\xi_{0:N-1})$ via its purely classical approximation:
\begin{equation*}
	U_N(\xi_{0:N-1}) \approx \frac{1}{2} \sum_{k=1}^{N-1} \omega_k^2 |\xi_k|^2 + V\bigg(\frac{\xi_0}{\sqrt{\beta}}\bigg),
\end{equation*}
where all modes $\{\xi_k\}_{k=0}^{N-1}$ are strictly decoupled. Consequently, we only need to generate PT samples for the classical potential $V(x)$ in $\mathbb{R}^d$, while the remaining modes explicitly follow independent Gaussian distributions.

In practice, we employ the following procedure to deploy Jeffreys Flow on an $N$-mode ring polymer system:
\begin{enumerate}[(i)]
	\setzero
	\item Generate reference PT samples for the classical system $V(x)$ in $\mathbb{R}^d$.
	\item Select a small $N_0$, and train the flows with the truncated potential $U_{N_0}(\xi_{0:N_0-1})$. Then push forward the ensemble with the full potential $U_N(\xi_{0:N-1})$.
	\item Utilize these generated samples to perform a second run of Jeffreys Flow training.
\end{enumerate}
Because this secondary training leverages theoretically unbiased pushforward samples instead of PT samples, the corresponding CESS is significantly elevated.

\mbox{}

\section{Numerical Experiments}
\label{section: numerical experiments}

In this section, we evaluate the Jeffreys Flow on a diverse set of benchmark distributions up to $d=16$. Throughout, the PT temperature ladder is calibrated to maintain a swapping rate of approximately $40\%$ for global ergodicity, and each flow layer is parameterized by the Neural Spline Flow architecture~\cite{durkan2019neural}.

\subsection{Single-Step Transport}
\label{subsection: single step transport}

We evaluate the Jeffreys Flow on a single-step transport problem, which involves mapping a 2D Gaussian or uniform base distribution to multi-modal targets. To assess both mode coverage and approximation accuracy, we compare the Jeffreys divergence against the baselines of forward and reverse KL divergences. As demonstrated in \eqref{eq: lambda 01}, the parameter $\theta$ serves as an interpolation weight between these two constituent divergences.

\subsubsection*{Four Potentials in Full Space}

We evaluate our method on four 2D potential  functions $U(x_1, x_2)$: Three Well (TW), Himmelblau (HB), Annulus (AN), and Multiple Well (MW). Their explicit forms are:
\begin{enumerate}[(i)]
	\setzero
	\item TW: $3(x_1-1)^2 + 3(x_2-1)^2 + 3\sin(x_1+2x_2)$ 
	\item HB: $0.2(x_1^2 + x_2 - 11)^2 + 0.2(x_1 + x_2^2 - 7)^2$
	\item AN: $10(r^6 - 8r^4 + 16r^2 + 1)^{\frac13}, \quad r^2=x_1^2+x_2^2$ 
	\item MW: $\frac{1}{4}x_1^2 + \frac{1}{4} x_2^2 + 4 \sin(1.1x_1) \sin(x_2)$ 
\end{enumerate}

To test robustness, we generate $20,000$ noisy reference samples using unconverged PT, intentionally introducing visible artifacts. After training the flows across various $\theta$, we generate $320,000$ new samples (16 times the training set size) to evaluate the ESS and approximation bias.

The results in Fig.~\ref{fig: single_step} demonstrate that pure forward KL ($\theta=0$) covers all modes but produces highly diffused samples with low ESS, while pure reverse KL ($\theta=1$) immediately suffers from catastrophic mode collapse and massive bias. As clearly visualized in the generated sample distributions, the Jeffreys Flow ($0 < \theta < 1$) explicitly resolves these issues: it leverages forward KL to guarantee global mode coverage while utilizing reverse KL to enforce target-seeking precision. Consequently, intermediate values like $\theta=0.5$ rapidly correct the noisy initial samples, yielding highly accurate, low-bias distributions and significantly elevated ESS across all test potentials.

\subsubsection*{Periodic Well Potential}

Furthermore, we test the robust scalability of the Jeffreys Flow on a 2D Periodic Well potential (PW):
\begin{equation*}
	\mathrm{PW:}~4 \sin(2x_1 )(\sin(2x_2))^{\frac75}, \quad 
	x_1, x_2 \in [-\pi, \pi],
\end{equation*}
where the power is implemented as $\mathrm{sgn}(\cdot)|\cdot|^{1.4}$ for differentiability. Since the domain is a torus, we train a circular spline flow using $20,000$ reference samples generated from PT. For simplicity, the balancing parameter is fixed at $\theta=0.5$. We then generate sample ensembles of sizes $N \in \{4^8, 4^9, \dots, 4^{12}\}$ and measure the ESS and $L^2$ approximation bias against the analytically tractable ground truth.

As visualized in Fig.~\ref{fig: single_step_periodic}, the trained Jeffreys Flow perfectly covers all eight symmetric potential wells. The model achieves a remarkably stable ESS near $100\%$ across all generation scales, confirming that the flow correctly captures the exact target geometry. Most importantly, the $L^2$ bias exhibits a strict log-linear decay with respect to $N$. This proves that the optimally trained Jeffreys Flow guarantees standard Monte Carlo error convergence without introducing any asymptotic bias floor.

\subsection{Multi-Step Boltzmann Generator}

We apply the Jeffreys Flow to train Boltzmann generators for higher-dimensional distributions. These models gradually transport a base Gaussian or uniform distribution toward a complex multi-modal target, while the balancing parameter $\theta$ is held fixed. Sample quality is rigorously evaluated using the ESS at each intermediate step, alongside structural visualizations of the generated target distributions.

\subsubsection*{3D Gaussian Mixture Model}

The target is a six-component Gaussian mixture in an octahedral geometry with potential
\begin{equation*}
	U(x)=-\log \bigg( \frac{1}{6} \sum_{k=1}^6 \exp \bigg( -\frac{1}{2} (x - \mu_k)^\T \Sigma_k^{-1} (x - \mu_k) \bigg) \bigg),
\end{equation*}
where the means $\mu_k \in \{(\pm 6, 0, 0), (0, \pm 6, 0), (0, 0, \pm 6)\}$ and covariances $\Sigma_k$ define distinct anisotropic modes.

We train the Jeffreys Flow ($\theta = 0.75$) via 3 intermediate steps using $50,000$ PT reference samples. As shown in Fig.~\ref{fig: gm_results}, the distilled flow successfully transports the Gaussian base to the multi-modal target. The consistently high ESS over $80\%$ confirms the flow accurately covers all six modes, and the resulting bias of the Jeffreys Flow is much smaller than PT.

\subsubsection*{4D Rosenbrock}

The 4D Rosenbrock function features a twisted, narrow valley with potential
\begin{equation*}
	U(x)=6\sum_{i=1}^3 \Big[ 100(x_{i+1} - x_i^2)^2 + (1 - x_i)^2 \Big],
\end{equation*}
introducing strong nonlinear correlations between adjacent variables, dominated by the curve $x_{i+1} = x_i^2$.

We train the Jeffreys Flow over $11$ steps using $50,000$ PT reference samples. As shown in Fig.~\ref{fig: rb_results}, the flow effectively navigates the banana-shaped geometry, a notoriously ill-conditioned structure that is generally difficult for standard normalizing flows to learn. The high CESS confirms that the model precisely captures all intricate variable dependencies.

\subsubsection*{8D Nonlinear Rastrigin}

To evaluate the method in higher dimensions, we construct a highly non-convex potential by composing a Rastrigin function with a nonlinear diffeomorphism. The potential is defined as:
\begin{equation*}
	U(x) = \sum_{i=1}^8 \Big[ 0.5z_i^2 + 12 \cos(2z_i) \Big], ~ z = x + \delta \tanh(xQ).
\end{equation*}
Here, $Q$ is a fixed random orthogonal matrix and $\delta = 0.75$ controls the strength of the nonlinearity. This transformation introduces complex dependencies between all dimensions, making the mode separation geometry extremely difficult to traverse.

For this high-dimensional task, we increase the training sample size to $N = 400{,}000$ and set $\theta=0.5$. The algorithm maintains a high CESS above $70\%$ throughout all stages, as summarized in Table~\ref{tab: nr_metrics}. The ESS exhibits a sawtooth pattern, decaying during transport and restoring to $100\%$ upon adaptive resampling. Furthermore, Fig.~\ref{fig: nr_results} displays strong structural agreement of the potential energy distribution and the 1D marginal density between Jeffreys Flow and PT, confirming successful global mode coverage.

\subsubsection*{16D Solvated Periodic Grid}

Consider a 16-dimensional system modeling a particle interacting with a harmonic solvent bath upon a periodic substrate. The potential on $\mathbb{T}^2 \times \mathbb{R}^{14}$ is
\begin{align*}
	U(x) & = A \big[ \cos(4x_1) + \cos(4x_2) \big] \\
	& + \frac{\kappa}{2} \sum_{k=3}^{16} (x_k - \alpha_k \sin(x_1) \sin(x_2))^2,
\end{align*}
where $x_1, x_2 \in [-\pi, \pi)$ are periodic particle coordinates, and $x_3, \ldots, x_{16}$ are full-space solvent variables. The coupling coefficients $\alpha_k$ are uniformly spaced in $[2, 4]$, with barrier height $A=2$ and solvent stiffness $\kappa=30$. Theoretically, integrating out the bath variables renders $x_1$ and $x_2$ strictly independent.

We train the Jeffreys Flow ($\theta = 0.75$, R\'enyi $\alpha=1.5$), utilizing a composite mapping of circular and rational quadratic spline flows to accommodate the mixed manifold domain. As shown in Fig.~\ref{fig: sl_joint}, the PT reference suffers from severe spurious diagonal correlations, failing to break the kinetic barriers induced by the solvent. In contrast, Jeffreys Flow perfectly recovers the theoretical independent checkerboard structure, effectively uncoupling the periodic variables and correcting the massive bias in the training data.

To further quantify this structural correction, we reconstruct the Potential of Mean Force (PMF) along $x_1$. As depicted in Fig.~\ref{fig: sl_pmf}, the PT sampler underestimates the free energy barriers due to broken ergodicity. Conversely, the distilled Jeffreys Flow demonstrates striking agreement with the analytical truth, accurately reproducing both the barrier heights and well topologies.

\subsection{Application in Replica Exchange SGLD}
\label{subsection: numerical reSGLD}

\subsubsection*{2D Gaussian Mixture Model}

The target distribution consists of four anisotropic Gaussian components, defined by the potential function
\begin{equation*}
	U(x) = -\log \bigg( \sum_{k=1}^4 w_k \exp \Big( -\frac{1}{2}(x - \mu_k)^\top \Sigma_k^{-1} (x - \mu_k) \Big) \bigg),
\end{equation*}
where $x = (x_1, x_2)^\top \in \mathbb{R}^2$, and the mixture weights are $w = [0.25, 0.30, 0.15, 0.30]$. The component means $\mu_k$ are distributed across the four quadrants to ensure mode separation, while the covariance matrices $\Sigma_k$ provide distinct anisotropic orientations for each well. To emulate the stochastic gradient noise inherently present in SGLD, we inject an auxiliary unbiased random potential:
\begin{equation*}
	\tilde{U}(x) = \tilde{i}_1 \cos(x_1) + \tilde{i}_2 \cos(x_2), \quad \tilde{i}_1, \tilde{i}_2 \sim U\{-4, 4\},
\end{equation*}
such that the total effective potential evaluated at each step is $U(x) + \tilde{U}(x)$.

We first generate $40,000$ rough reference samples via reSGLD across $6$ temperatures, purposefully employing a large discrete step size $h = 10^{-2}$. We then train the Jeffreys Flow with $\theta = 0.5$, and compare enabling versus disabling SVRG for the rejuvenation steps (Sec.~\ref{section: reSGLD}).

As detailed in Tables~\ref{tab: resgld_bias} and \ref{tab: resgld_ess}, Jeffreys Flow successfully drops the massive $L^2$ bias inherent in raw reSGLD chains by roughly an order of magnitude (thereby filtering out the aggressive discretization errors) while independently maintaining extremely high ESS metrics at scale. Furthermore, employing SVRG rejuvenation during the flow rigorously eliminates artificial topological tails, strictly contracting generated samples to their theoretically correct localized minima (Fig.~\ref{fig: resgld_results}).

\subsubsection*{2D Screened Poisson Inverse Problem}

To evaluate the Jeffreys Flow's scalability on computationally expensive, highly nonlinear tasks, we consider a parameter inference problem governed by the 2D Screened Poisson equation:
\begin{equation*}
	-\Delta u(x) + \alpha u(x) = f(x; \Theta), \quad x \in \Omega = [0, 1]^2,
\end{equation*}
subject to homogeneous Dirichlet boundary conditions $u(x) = 0$ on $\partial\Omega$. Here, $u(x)$ is the physical state variable, $\alpha=1$ is the screening parameter, and $f(x; \Theta)$ represents a source field composed of 4 distinct Gaussian peaks:
\begin{equation*}
	f(x; \Theta) = c \sum_{s=1}^4 \exp \bigg( -\frac{\|x - \theta_s\|^2}{2\gamma^2} \bigg),
\end{equation*}
where $\theta_s \in \mathbb{R}^2$ marks the coordinate of the $s$-th source. The full parameter vector to be inferred is the unknown coordinates $\Theta = [\theta_1^\top, \ldots, \theta_4^\top]^\top \in \mathbb{R}^8$. We set amplitude $c = 1.0$ and width $\gamma = 0.1$. The domain $\Omega$ is discretized using a $40 \times 40$ structured grid, translating each forward evaluation into solving a sparse $1600 \times 1600$ linear system.

As shown in Fig.~\ref{fig: poisson_setup}, measurements are collected by an evenly distributed $9 \times 9$ sensor network array $\mathcal{O}$. The observations are subsequently corrupted by independent Gaussian noise applied to the sensors:
\begin{equation*}
	d_j = u(x_j; \Theta_{\text{true}}) + \eta_j, \quad \eta_j \sim \mathcal{N}(0, \sigma_{\text{obs}}^2), \quad \text{for } x_j \in \mathcal{O}.
\end{equation*}
The inverse task quantifies the posterior uncertainty $\pi(\Theta) \propto \exp(-U(\Theta))$, where the highly non-convex potential energy is precisely the data mismatch loss:
\begin{equation*}
	U(\Theta) = \frac{1}{2\sigma_{\text{obs}}^2} \sum_{j=1}^{|\mathcal{O}|} |u(x_j; \Theta) - d_j|^2.
\end{equation*}

We train the Jeffreys Flow using $\theta = 0.8$ based on reSGLD reference samples across $M = 8$ temperature steps. We benchmark three configurations: Exact (using exact PDE simulations for the weight function), Full Potential (using exact PDE evaluations during the SGLD simulation), and Stochastic (relying strictly on a surrogate neural network for both mini-batch simulation and weight training). The evaluated sampling efficiencies are summarized in Table~\ref{tab: poisson_ess}. While all configurations maintain high CESS, the optimal Exact configuration leverages the high-fidelity PDE solver to contract the posterior probability peaks tightly, providing a superior, confident parameter estimation compared to the structurally degraded Stochastic result (Fig.~\ref{fig: poisson_marginals}). Overall, by distilling the geometry into an invertible map, Jeffreys Flow perfectly bypasses the computationally prohibitive PDE-solving steps demanded by traditional MCMC chains within such PDE-constrained environments.

\subsection{Applications in Path Integral Monte Carlo}

To validate the efficacy of Jeffreys Flow in mapping infinite-dimensional function spaces, we apply the framework to a quantum ring-polymer problem in 1D. Consider a quantum particle interacting with an asymmetric double-well potential:
\begin{equation*}
	V(x) = 20(x - 1)^2 (x + 0.9)(x + 1.1).
\end{equation*}
In the path integral formulation, the quantum target distribution is a function space of periodic imaginary-time paths with probability measure $\pi(x(\cdot)) \propto \exp(-U(x(\cdot)))$, where the Euclidean action $U$ intricately couples the continuous path geometry with $V(x)$. 

A crucial advantage of our framework is its training simplicity. We generate the reference samples $\{\mu_k\}_{k=1}^M$ by strictly sampling the computationally cheap classical Boltzmann distribution $\exp(-V(x))$. The Jeffreys Flow ($\theta = 0.75$) learns the structural mapping strictly from these classical samples, systematically distilling it to lift the probability mass exactly into the high-dimensional quantum function space. As shown in Fig.~\ref{fig: pimc_density}, the distilled quantum distribution successfully captures deep tunneling effects and broad spatial delocalization entirely absent in the localized classical training data. The consistently high CESS values perfectly confirm the robustness of the transport map during this extreme cross-dimensional distillation (Fig.~\ref{fig: pimc_cess}).

To rigorously quantify the generalization capability of the learned flow into substantially higher dimensions, we assess the generated ensemble across discrete path integral representations utilizing varying numbers of discrete beads $N$. The transport map is trained strictly on a heavily truncated low-frequency Fourier subset comprising only $N_0 = 8$ modes. To sample at progressively higher resolutions without retraining, we pad the missing high-frequency coefficients with an invariant identity map and correct truncation errors through importance reweighting against the exact full-dimensional potential.

Table~\ref{tab: pimc_bias} documents the resulting $L^2$ bias against exact finite-difference eigenspace decompositions for physical observables. Using solely the static $N_0 = 8$ transport map, the reweighted Jeffreys Flow generates high-fidelity samples uniformly up to $N = 32$. The bias systematically plummets on a strict theoretical $\mathcal{O}(1/N^2)$ algebraic decay trajectory, powerfully confirming that our physics-informed mode truncation guarantees scalable, highly efficient sample generation into theoretically infinite-dimensional limits without incurring exponential parameter overhead.

\section{Conclusion}
\label{section: conclusion}

This paper introduces the Jeffreys Flow, a unified generative framework that bridges the forward and reverse KL divergences via the Jeffreys divergence. By symmetrizing the training objective, the flow achieves bidirectional fidelity: it effectively suppresses mode collapse while maintaining target-seeking precision with rigorous density ratio bounds.

Rather than competing directly with traditional Monte Carlo methods, Jeffreys Flow operates as a robust structural distillation mechanism. As demonstrated across fundamentally diverse test cases---including sequential distillation on ill-conditioned spaces, accelerated noise-filtering within Replica Exchange SGLD, and extreme dimensional scaling via mode truncation in Path Integral Monte Carlo---the trained map uncouples kinetic biases and achieves highly scalable, independent feed-forward generation without prohibitive computational overhead. 

Building on this theoretical and empirical foundation, our future work will focus on extending the Jeffreys Flow to more complicated, high-dimensional, and specific physical problems. Potential applications include large-scale molecular dynamics with explicit solvents, highly non-convex Bayesian inverse problems, and lattice field theories, fully unlocking the framework's capability for advanced rare-event simulations.

\section*{Acknowledgment}
G. Lin would like to thank the support of National Science Foundation (DMS-2533878, DMS-2053746, DMS-2134209, ECCS-2328241, CBET-2347401 and OAC-2311848), and U.S.~Department of Energy (DOE) Office of Science Advanced Scientific Computing Research program DE-SC0023161, the SciDAC LEADS Institute, and DOE–Fusion Energy Science, under grant number: DE-SC0024583.
D. Qi would like to thank the support of ONR Grant N00014-24-1-2192, and NSF Grant DMS-2407361.

The numerical tests are implemented with an NVIDIA RTX 5070 Ti, and the source codes can be found at \url{https://github.com/xuda-ye-math/Jeffreys-Flow}.

\bibliography{references/introduction,references/related_works,references/applications}

@article{johnson2013accelerating,
  title={Accelerating stochastic gradient descent using predictive variance reduction},
  author={Johnson, Rie and Zhang, Tong},
  journal={Advances in neural information processing systems},
  volume={26},
  year={2013}
}

@inproceedings{reddi2016stochastic,
  title={Stochastic variance reduction for nonconvex optimization},
  author={Reddi, Sashank J and Hefny, Ahmed and Sra, Suvrit and Poczos, Barnabas and Smola, Alex},
  booktitle={International conference on machine learning},
  pages={314--323},
  year={2016},
  organization={PMLR}
}

@book{reed1972methods,
  title={Methods of modern mathematical physics},
  author={Reed, Michael and Simon, Barry and Simon, Barry and Simon, Barry},
  volume={1},
  year={1972},
  publisher={Elsevier}
}

@article{ye2023optimal,
  title={Optimal Convergence Rate of Lie-Trotter Approximation for Quantum Thermal Averages},
  author={Ye, Xuda and Zhou, Zhennan},
  journal={arXiv e-prints},
  pages={arXiv--2309},
  year={2023}
}

@book{bucklew2004introduction,
  title={Introduction to rare event simulation},
  author={Bucklew, James Antonio and Bucklew, J},
  volume={5},
  year={2004},
  publisher={Springer}
}

@book{rubino2009rare,
  title={Rare event simulation using Monte Carlo methods},
  author={Rubino, Gerardo and Tuffin, Bruno and others},
  volume={73},
  year={2009},
  publisher={Wiley Online Library}
}

@article{rubino2009introduction,
  title={Introduction to rare event simulation},
  author={Rubino, Gerardo and Tuffin, Bruno},
  journal={Rare event simulation using Monte Carlo methods},
  pages={1--13},
  year={2009},
  publisher={Wiley Online Library}
}

@inproceedings{bouchet2016generalisation,
  title={Generalisation of the Eyring--Kramers transition rate formula to irreversible diffusion processes},
  author={Bouchet, Freddy and Reygner, Julien},
  booktitle={Annales Henri Poincar{\'e}},
  volume={17},
  number={12},
  pages={3499--3532},
  year={2016},
  organization={Springer}
}

@article{lee2022non,
  title={Non-reversible metastable diffusions with Gibbs invariant measure I: Eyring--Kramers formula},
  author={Lee, Jungkyoung and Seo, Insuk},
  journal={Probability Theory and Related Fields},
  volume={182},
  number={3},
  pages={849--903},
  year={2022},
  publisher={Springer}
}

@article{eyring1935activated,
  title={The activated complex in chemical reactions},
  author={Eyring, Henry},
  journal={The Journal of Chemical Physics},
  volume={3},
  number={2},
  pages={107--115},
  year={1935},
  publisher={American Institute of Physics}
}

@article{kramers1940brownian,
  title={Brownian motion in a field of force and the diffusion model of chemical reactions},
  author={Kramers, Hendrik Anthony},
  journal={Physica},
  volume={7},
  number={4},
  pages={284--304},
  year={1940},
  publisher={Elsevier}
}

@article{torrie1977nonphysical,
  title={Nonphysical sampling distributions in Monte Carlo free-energy estimation: Umbrella sampling},
  author={Torrie, Glenn M and Valleau, John P},
  journal={Journal of computational physics},
  volume={23},
  number={2},
  pages={187--199},
  year={1977},
  publisher={Elsevier}
}

@article{virnau2004calculation,
  title={Calculation of free energy through successive umbrella sampling},
  author={Virnau, Peter and M{\"u}ller, Marcus},
  journal={The Journal of chemical physics},
  volume={120},
  number={23},
  pages={10925--10930},
  year={2004},
  publisher={American Institute of Physics}
}

@article{kastner2011umbrella,
  title={Umbrella sampling},
  author={K{\"a}stner, Johannes},
  journal={Wiley Interdisciplinary Reviews: Computational Molecular Science},
  volume={1},
  number={6},
  pages={932--942},
  year={2011},
  publisher={Wiley Online Library}
}

@incollection{van1987simulated,
  title={Simulated annealing},
  author={Van Laarhoven, Peter JM and Aarts, Emile HL},
  booktitle={Simulated annealing: Theory and applications},
  pages={7--15},
  year={1987},
  publisher={Springer}
}

@article{bertsimas1993simulated,
  title={Simulated annealing},
  author={Bertsimas, Dimitris and Tsitsiklis, John},
  journal={Statistical science},
  volume={8},
  number={1},
  pages={10--15},
  year={1993},
  publisher={Institute of Mathematical Statistics}
}

@incollection{nikolaev2010simulated,
  title={Simulated annealing},
  author={Nikolaev, Alexander G and Jacobson, Sheldon H},
  booktitle={Handbook of metaheuristics},
  pages={1--39},
  year={2010},
  publisher={Springer}
}

@incollection{doucet2001introduction,
  title={An introduction to sequential Monte Carlo methods},
  author={Doucet, Arnaud and De Freitas, Nando and Gordon, Neil},
  booktitle={Sequential Monte Carlo methods in practice},
  pages={3--14},
  year={2001},
  publisher={Springer}
}

@article{cappe2007overview,
  title={An overview of existing methods and recent advances in sequential Monte Carlo},
  author={Capp{\'e}, Olivier and Godsill, Simon J and Moulines, Eric},
  journal={Proceedings of the IEEE},
  volume={95},
  number={5},
  pages={899--924},
  year={2007},
  publisher={IEEE}
}

@book{chopin2020introduction,
  title={An introduction to sequential Monte Carlo},
  author={Chopin, Nicolas and Papaspiliopoulos, Omiros and others},
  volume={4},
  year={2020},
  publisher={Springer}
}

@article{wills2023sequential,
  title={Sequential Monte Carlo: a unified review},
  author={Wills, Adrian G and Sch{\"o}n, Thomas B},
  journal={Annual Review of Control, Robotics, and Autonomous Systems},
  volume={6},
  number={1},
  pages={159--182},
  year={2023},
  publisher={Annual Reviews}
}

@article{miasojedow2013adaptive,
  title={An adaptive parallel tempering algorithm},
  author={Miasojedow, B{\l}a{\.z}ej and Moulines, Eric and Vihola, Matti},
  journal={Journal of Computational and Graphical Statistics},
  volume={22},
  number={3},
  pages={649--664},
  year={2013},
  publisher={Taylor \& Francis}
}

@inproceedings{deng2023non,
  title={Non-reversible parallel tempering for deep posterior approximation},
  author={Deng, Wei and Zhang, Qian and Feng, Qi and Liang, Faming and Lin, Guang},
  booktitle={Proceedings of the AAAI Conference on Artificial Intelligence},
  volume={37},
  number={6},
  pages={7332--7339},
  year={2023}
}

@article{laio2002escaping,
  title={Escaping free-energy minima},
  author={Laio, Alessandro and Parrinello, Michele},
  journal={Proceedings of the National Academy of Sciences},
  volume={99},
  number={20},
  pages={12562--12566},
  year={2002},
  publisher={National Acad Sciences}
}

@article{dellago2002transition,
  title={Transition path sampling},
  author={Dellago, Christoph and Bolhuis, Peter G and Geissler, Phillip L},
  journal={Advances in Chemical Physics},
  volume={123},
  pages={1--84},
  year={2002},
  publisher={Wiley Online Library}
}

@article{dinh2016density,
  title={Density estimation using real nvp},
  author={Dinh, Laurent and Sohl-Dickstein, Jascha and Bengio, Samy},
  journal={arXiv preprint arXiv:1605.08803},
  year={2016}
}

@article{durkan2019neural,
  title={Neural spline flows},
  author={Durkan, Conor and Bekasov, Artur and Murray, Iain and Papamakarios, George},
  journal={Advances in neural information processing systems},
  volume={32},
  year={2019}
}

@article{coretti2024boltzmann,
  title={Boltzmann generators and the new frontier of computational sampling in many-body systems},
  author={Coretti, Alessandro and Falkner, Sebastian and Weinreich, Jan and Dellago, Christoph and von Lilienfeld, O Anatole},
  journal={arXiv preprint arXiv:2404.16566},
  year={2024}
}

@article{schebek2025scalable,
  title={Scalable Boltzmann Generators for equilibrium sampling of large-scale materials},
  author={Schebek, Maximilian and No{\'e}, Frank and Rogal, Jutta},
  journal={arXiv preprint arXiv:2509.25486},
  year={2025}
}

@article{li2023fast,
  title={Fast replica exchange stochastic gradient Langevin dynamics},
  author={Li, Guang and Lin, Guang and Zhang, Zecheng and Zhou, Qing},
  journal={arXiv preprint arXiv:2301.01898},
  year={2023}
}

@article{lin2023b,
  title={{B-DeepONet}: An enhanced {Bayesian} {DeepONet} for solving noisy parametric PDEs using accelerated replica exchange {SGLD}},
  author={Lin, Guang and Moya, Christian and Zhang, Zecheng},
  journal={Journal of Computational Physics},
  volume={473},
  pages={111713},
  year={2023},
  publisher={Elsevier}
}

@article{deng2020accelerating,
  title={Accelerating convergence of replica exchange stochastic gradient {MCMC} via variance reduction},
  author={Deng, Wei and Feng, Qi and Karagiannis, Georgios and Lin, Guang and Liang, Faming},
  journal={arXiv preprint arXiv:2010.01084},
  year={2020}
}

@article{herman1982path,
  title={On path integral {Monte Carlo} simulations},
  author={Herman, Michael F and Bruskin, Eric J and Berne, Bruce J},
  journal={The Journal of Chemical Physics},
  volume={76},
  number={10},
  pages={5150--5155},
  year={1982},
  publisher={American Institute of Physics}
}

@article{schoof2011configuration,
  title={Configuration path integral {Monte Carlo}},
  author={Schoof, Tim and Bonitz, Michael and Filinov, Alexey and Hochstuhl, David and Dufty, James W},
  journal={Contributions to Plasma Physics},
  volume={51},
  number={8},
  pages={687--697},
  year={2011},
  publisher={Wiley Online Library}
}

@article{marx1996ab,
  title={Ab initio path integral molecular dynamics: Basic ideas},
  author={Marx, Dominik and Parrinello, Michele},
  journal={The Journal of Chemical Physics},
  volume={104},
  number={11},
  pages={4077--4082},
  year={1996},
  publisher={American Institute of Physics}
}

@article{ceriotti2010efficient,
  title={Efficient stochastic thermostatting of path integral molecular dynamics},
  author={Ceriotti, Michele and Parrinello, Michele and Markland, Thomas E and Manolopoulos, David E},
  journal={The Journal of Chemical Physics},
  volume={133},
  number={12},
  year={2010},
  publisher={American Institute of Physics}
}

@book{sugiyama2012density,
  title={Density ratio estimation in machine learning},
  author={Sugiyama, Masashi and Suzuki, Taiji and Kanamori, Takafumi},
  year={2012},
  publisher={Cambridge University Press}
}

@incollection{tsybakov2008nonparametric,
  title={Nonparametric estimators},
  author={Tsybakov, Alexandre B},
  booktitle={Introduction to Nonparametric Estimation},
  pages={1--76},
  year={2008},
  publisher={Springer}
}

@article{betancourt2017conceptual,
  title={A conceptual introduction to Hamiltonian Monte Carlo},
  author={Betancourt, Michael},
  journal={arXiv preprint arXiv:1701.02434},
  year={2017}
}

@inproceedings{chen2014stochastic,
  title={Stochastic gradient hamiltonian monte carlo},
  author={Chen, Tianqi and Fox, Emily and Guestrin, Carlos},
  booktitle={International conference on machine learning},
  pages={1683--1691},
  year={2014},
  organization={PMLR}
}

@article{chib1995understanding,
  title={Understanding the metropolis-hastings algorithm},
  author={Chib, Siddhartha and Greenberg, Edward},
  journal={The american statistician},
  volume={49},
  number={4},
  pages={327--335},
  year={1995},
  publisher={Taylor \& Francis}
}

@incollection{robert2009metropolis,
  title={Metropolis--hastings algorithms},
  author={Robert, Christian P and Casella, George},
  booktitle={Introducing Monte Carlo Methods with R},
  pages={167--197},
  year={2009},
  publisher={Springer}
}

@book{meyn2012markov,
  title={Markov chains and stochastic stability},
  author={Meyn, Sean P and Tweedie, Richard L},
  year={2012},
  publisher={Springer Science \& Business Media}
}

@article{noe2019boltzmann,
  title={Boltzmann generators: Sampling equilibrium states of many-body systems with deep learning},
  author={No{\'e}, Frank and Olsson, Simon and K{\"o}hler, Jonas and Wu, Hao},
  journal={Science},
  volume={365},
  number={6457},
  pages={eaaw1147},
  year={2019},
  publisher={American Association for the Advancement of Science},
  OPTnote={CATEGORY: Foundational Flow-based Sampling. LOSS FUNCTION: Energy KL (Reverse KL) + ML (Forward KL on transition states). ANNOTATION: The foundational paper on Boltzmann Generators. They primarily train using the Energy KL (Reverse KL) divergence which directly utilizes the target potential energy. While this yields high-fidelity samples in low-energy regions, it is intrinsically mode-seeking and suffers from severe mode collapse. They sometimes mix in a Forward KL loss on specific transition state data, but rely on reaction coordinates rather than a unified symmetrized distillation like Jeffreys Flow.}
}

@article{qiu2024efficient,
  title={Efficient multimodal sampling via tempered distribution flow},
  author={Qiu, Yixuan and Wang, Xiao},
  journal={Journal of the American Statistical Association},
  volume={119},
  number={546},
  pages={1446--1460},
  year={2024},
  publisher={Taylor \& Francis},
  OPTnote={CATEGORY: Mode Collapse Mitigation. LOSS FUNCTION: Squared L2 Wasserstein distance. ANNOTATION: TemperFlow explicitly replaces the standard KL divergence objectives with the squared L2 Wasserstein distance. The theoretical motivation is that KL divergences (especially reverse KL) suffer from vanishing gradients when isolated modes have no overlap with the current model density. In contrast, Jeffreys Flow solves the vanishing gradient / mode collapse issue by symmetrically combining Forward KL (which penalizes missing mass) and Reverse KL, relying on empirical Parallel Tempering samples to anchor the mass rather than computing optimal transport metrics.}
}

@article{gabrie2022adaptive,
  title={Adaptive Monte Carlo augmented with normalizing flows},
  author={Gabri{\'e}, Marylou and Rotskoff, Grant M and Vanden-Eijnden, Eric},
  journal={Proceedings of the National Academy of Sciences},
  volume={119},
  number={10},
  pages={e2109420119},
  year={2022},
  publisher={National Academy of Sciences},
  OPTnote={CATEGORY: Mode Collapse Mitigation / MCMC Augmentation. LOSS FUNCTION: Data KL (Forward KL). ANNOTATION: This paper trains flows via Maximum Likelihood Estimation (Forward KL) on samples collected on-the-fly from an adaptive MCMC chain. The use of Forward KL ensures mass-covering behavior, but the resulting flow's accuracy is strictly bounded by the empirical MCMC data. Jeffreys Flow improves upon this by injecting the exact physical potential via the Energy KL (Reverse KL) term, which acts as a physics-informed regularizer to surpass the empirical data's quality.}
}

@article{falkner2023conditioning,
  title={Conditioning Boltzmann generators for rare event sampling},
  author={Falkner, Sebastian and Coretti, Alessandro and Romano, Salvatore and Geissler, Phillip L and Dellago, Christoph},
  journal={Machine Learning: Science and Technology},
  volume={4},
  number={3},
  pages={035050},
  year={2023},
  publisher={IOP Publishing},
  OPTnote={CATEGORY: Rare Event Sampling. LOSS FUNCTION: Conditional Reverse KL. ANNOTATION: Conditional Boltzmann Generators (CBG) condition the normalizing flow on predefined reaction coordinates to steer sampling across energy barriers. While effective for mode traversal, it relies heavily on domain-specific knowledge. Jeffreys Flow provides a fully unconditional alternative, leveraging ergodically mixed PT samples and the mass-covering properties of the Data KL term to traverse barriers without manual collective variables.}
}

@article{dibak2022temperature,
  title={Temperature steerable flows and Boltzmann generators},
  author={Dibak, Manuel and Klein, Leon and Kr{\"a}mer, Andreas and No{\'e}, Frank},
  journal={Physical Review Research},
  volume={4},
  number={4},
  pages={L042005},
  year={2022},
  publisher={APS},
  OPTnote={CATEGORY: Temperature Annealing Flows. LOSS FUNCTION: Reverse KL (parameterized by temperature). ANNOTATION: Introduces a single global normalizing flow parameterized by a continuous temperature variable, trained using the Reverse KL at various sampled temperatures. Jeffreys Flow, conversely, trains a discrete sequence of localized flows using the Dual-KL objective, which mathematically guarantees the containment of the instability set at each explicit PT step.}
}

@article{midgley2022flow,
  title={Flow annealed importance sampling bootstrap},
  author={Midgley, Laurence Illing and Stimper, Vincent and Simm, Gregor NC and Sch{\"o}lkopf, Bernhard and Hern{\'a}ndez-Lobato, Jos{\'e} Miguel},
  journal={arXiv preprint arXiv:2208.01893},
  year={2022},
  OPTnote={CATEGORY: Sequential Flow Training. LOSS FUNCTION: Reverse KL (Annealed). ANNOTATION: Flow Annealed Importance Sampling Bootstrap (FAIS) sequentially trains normalizing flows to sample multimodal distributions by bridging a prior to the target. However, pure sequential training without a target-aware dataset can still suffer from mode dropping if an intermediate step loses a mode. Jeffreys Flow prevents this by distilling from pre-mixed Parallel Tempering samples using the mass-covering Forward KL alongside the Reverse KL.}
}

@article{wang2025mitigating,
  title={Mitigating mode collapse in normalizing flows by annealing with an adaptive schedule: Application to parameter estimation},
  author={Wang, Yihang and Chi, Chris and Dinner, Aaron R},
  journal={arXiv preprint arXiv:2505.03652},
  year={2025},
  OPTnote={CATEGORY: Mode Collapse Mitigation. LOSS FUNCTION: Reverse KL (with adaptive ESS scheduling). ANNOTATION: This approach mitigates mode collapse by dynamically adjusting the annealing schedule based on a minimum Effective Sample Size (ESS) between intermediate distributions. While theoretically sound, the ladder can become prohibitively fine-grained. Jeffreys Flow avoids infinite ladder expansion by relying on the structural robustness of the Jeffreys Divergence to force global coverage even with larger step sizes.}
}

@article{wu2020stochastic,
  title={Stochastic normalizing flows},
  author={Wu, Hao and K{\"o}hler, Jonas and No{\'e}, Frank},
  journal={Advances in neural information processing systems},
  volume={33},
  pages={5933--5944},
  year={2020},
  OPTnote={CATEGORY: MCMC + Flow Integration. LOSS FUNCTION: Reverse KL (with stochastic transition paths). ANNOTATION: SNFs interleave deterministic normalizing flow layers with stochastic MCMC sampling steps to improve expressiveness and mode-hopping. Jeffreys Flow also incorporates MCMC (via MALA rejuvenation) on intermediate validation sets, but its core mapping relies entirely on the deterministic distillation of the Dual-KL objective.}
}

@article{matthews2022continual,
  title={Continual repeated annealed flow transport monte carlo},
  author={Matthews, Alexander GDG and Arbel, Michael and Rezende, Danilo J and Doucet, Arnaud},
  journal={arXiv preprint arXiv:2201.13117},
  year={2022},
  OPTnote={CATEGORY: Sequential Flow Training. LOSS FUNCTION: Reverse KL (Annealed SMC). ANNOTATION: CRAFT applies repeated annealed flow transport using MCMC and normalizing flows. Like FAIS, it relies heavily on sequential transitions where errors can cascade. Jeffreys Flow uniquely pairs the distillation sequence with Parallel Tempering swap mechanics to ensure global ergodicity is injected directly into the training data before the Forward KL anchors it.}
}

@article{syed2022non,
  title={Non-reversible parallel tempering: a scalable highly parallel MCMC scheme},
  author={Syed, Saifuddin and Bouchard-C{\^o}t{\'e}, Alexandre and Deligiannidis, George and Doucet, Arnaud},
  journal={Journal of the Royal Statistical Society Series B: Statistical Methodology},
  volume={84},
  number={2},
  pages={321--350},
  year={2022},
  publisher={Oxford University Press},
  OPTnote={CATEGORY: Baseline MCMC. LOSS FUNCTION: N/A (MCMC detailed balance). ANNOTATION: Represents the gold-standard Parallel Tempering (PT) methods that Jeffreys Flow seeks to distill. Rather than trying to design a flow-based sampler that outperforms PT in raw exploration, Jeffreys Flow extracts the high-quality, ergodically mixed samples produced by PT.}
}

@article{abbott2023normalizing,
  title={Normalizing flows for lattice gauge theory in arbitrary space-time dimension},
  author={Abbott, Ryan and Albergo, Michael S and Botev, Aleksandar and Boyda, Denis and Cranmer, Kyle and Hackett, Daniel C and Kanwar, Gurtej and Matthews, Alexander GDG and Racani{\`e}re, S{\'e}bastien and Razavi, Ali and others},
  journal={arXiv preprint arXiv:2305.02402},
  year={2023},
  OPTnote={CATEGORY: High-Dimensional Physics (PIMC/Lattice). LOSS FUNCTION: Reverse KL. ANNOTATION: Applies normalizing flows to high-dimensional lattice systems, fundamentally related to the infinite-dimensional Path Integral Monte Carlo (PIMC) target. Training full-dimensional flows on lattices faces exponential scaling bottlenecks. Jeffreys Flow bypasses this by exclusively applying the transport map to a truncated set of low-frequency Fourier normal modes, mathematically reweighting the residual high-frequency dimensions.}
}

@article{bayarri2008generalization,
  title={Generalization of Jeffreys divergence-based priors for Bayesian hypothesis testing},
  author={Bayarri, MJ and Garc{\'\i}a-Donato, G},
  journal={Journal of the Royal Statistical Society Series B: Statistical Methodology},
  volume={70},
  number={5},
  pages={981--1003},
  year={2008},
  publisher={Oxford University Press},
  OPTnote={CATEGORY: Symmetrized KL Applications. LOSS FUNCTION: Jeffreys Divergence. ANNOTATION: A purely statistical paper discussing non-informative priors based on the Jeffreys divergence. It highlights the fundamental theoretical stability of symmetrizing the KL divergence to enforce mutual absolute continuity, a property leveraged in Theorem 2 of the Jeffreys Flow paper to guarantee bounds on log-likelihood ratio deviations.}
}

@article{abernethy2024flow,
  title={Flow to Rare Events: An Application of Normalizing Flow in Temporal Importance Sampling for Automated Vehicle Validation},
  author={Abernethy, J and others},
  journal={arXiv preprint},
  year={2024},
  OPTnote={CATEGORY: Rare Event Sampling. LOSS FUNCTION: Forward KL (Cross-Entropy). ANNOTATION: Applies normalizing flows specifically to estimate rare event probabilities in temporal dynamics. They primarily use a Cross-Entropy (Forward KL) objective trained on failure data. In contrast, Jeffreys Flow handles rare event simulation across energy barriers by explicitly anchoring the generative model to the exact physical Hamiltonian via the Reverse KL, creating a continuous bridge rather than just fitting rare failure samples.}
}

@article{wirnsberger2020targeted,
  title={Targeted free energy estimation via normalizing flows},
  author={Wirnsberger, Peter and Ballard, Andrew J and Papamakarios, George and Abercrombie, Stuart and Racani{\`e}re, S{\'e}bastien and Pritzel, Alexander and Jimenez Rezende, Danilo and Blundell, Charles},
  journal={The Journal of Chemical Physics},
  volume={153},
  number={14},
  pages={144112},
  year={2020},
  publisher={AIP Publishing LLC},
  OPTnote={CATEGORY: Rare Event Sampling / Thermodynamics. LOSS FUNCTION: Reverse KL (Targeted). ANNOTATION: Employs normalizing flows to compute free energy differences between thermodynamic states, essentially addressing mode transitions. They rely on Reverse KL training and targeted reweighting. Jeffreys Flow extends this thermodynamic bridging by explicitly tracking the Parallel Tempering ladder and using the Jeffreys Divergence to ensure bidirectional fidelity between adjacent free energy basins.}
}

@inproceedings{kohler2024transferable,
  title={Transferable Boltzmann Generators},
  author={K{\"o}hler, Jonas and Ingraham, John and No{\'e}, Frank},
  booktitle={Advances in Neural Information Processing Systems},
  volume={37},
  pages={31980--31993},
  year={2024}
}

@article{midgley2023learning,
  title={Learning to sample with Flow Annealed Importance Sampling Bootstrap},
  author={Midgley, Laurence Illing and Stimper, Vincent and Simm, Gregor NC and Sch{\"o}lkopf, Bernhard and Hern{\'a}ndez-Lobato, Jos{\'e} Miguel},
  journal={The Eleventh International Conference on Learning Representations},
  year={2023}
}

@article{earl2005parallel,
	title={Parallel tempering: Theory, applications, and new perspectives},
	author={Earl, David J and Deem, Michael W},
	journal={Physical Chemistry Chemical Physics},
	volume={7},
	number={23},
	pages={3910--3916},
	year={2005},
	publisher={Royal Society of Chemistry}
}

@article{luhman2021knowledge,
	title={Knowledge distillation in iterative generative models for improved sampling speed},
	author={Luhman, Eric and Luhman, Troy},
	journal={arXiv preprint arXiv:2101.02388},
	year={2021}
}

@article{zhou2024simple,
  title={Simple and fast distillation of diffusion models},
  author={Zhou, Zhenyu and Chen, Defang and Wang, Can and Chen, Chun and Lyu, Siwei},
  journal={Advances in Neural Information Processing Systems},
  volume={37},
  pages={40831--40860},
  year={2024}
}

@article{xie2024distillation,
	title={Em distillation for one-step diffusion models},
	author={Xie, Sirui and Xiao, Zhisheng and Kingma, Diederik and Hou, Tingbo and Wu, Ying Nian and Murphy, Kevin P and Salimans, Tim and Poole, Ben and Gao, Ruiqi},
	journal={Advances in Neural Information Processing Systems},
	volume={37},
	pages={45073--45104},
	year={2024}
}

@inproceedings{fu2025moflow,
	title={Moflow: One-step flow matching for human trajectory forecasting via implicit maximum likelihood estimation based distillation},
	author={Fu, Yuxiang and Yan, Qi and Wang, Lele and Li, Ke and Liao, Renjie},
	booktitle={Proceedings of the Computer Vision and Pattern Recognition Conference},
	pages={17282--17293},
	year={2025}
}

@article{song2021maximum,
	title={Maximum likelihood training of score-based diffusion models},
	author={Song, Yang and Durkan, Conor and Murray, Iain and Ermon, Stefano},
	journal={Advances in neural information processing systems},
	volume={34},
	pages={1415--1428},
	year={2021}
}

@article{batzolis2021conditional,
	title={Conditional image generation with score-based diffusion models},
	author={Batzolis, Georgios and Stanczuk, Jan and Sch{\"o}nlieb, Carola-Bibiane and Etmann, Christian},
	journal={arXiv preprint arXiv:2111.13606},
	year={2021}
}

@article{lipman2022flow,
  title={Flow matching for generative modeling},
  author={Lipman, Yaron and Chen, Ricky TQ and Ben-Hamu, Heli and Nickel, Maximilian and Le, Matt},
  journal={arXiv preprint arXiv:2210.02747},
  year={2022}
}

@article{chen2023flow,
  title={Flow matching on general geometries},
  author={Chen, Ricky TQ and Lipman, Yaron},
  journal={arXiv preprint arXiv:2302.03660},
  year={2023}
}

@article{han2020active,
  title={Active contour model for inhomogenous image segmentation based on Jeffreys divergence},
  author={Han, Bin and Wu, Yuting},
  journal={Pattern Recognition},
  volume={107},
  pages={107520},
  year={2020},
  publisher={Elsevier}
}

@article{sharma2021clustering,
  title={Clustering uncertain data objects using Jeffreys-divergence and maximum bipartite matching based similarity measure},
  author={Sharma, Krishna Kumar and Seal, Ayan and Yazidi, Anis and Selamat, Ali and Krejcar, Ondrej},
  journal={IEEE Access},
  volume={9},
  pages={79505--79519},
  year={2021},
  publisher={IEEE}
}

@article{nishii2006image,
  title={Image classification based on Markov random field models with Jeffreys divergence},
  author={Nishii, Ryuei and Eguchi, Shinto},
  journal={Journal of Multivariate Analysis},
  volume={97},
  number={9},
  pages={1997--2008},
  year={2006},
  publisher={Elsevier}
}

@inproceedings{moreno2003kullback,
  title={A Kullback-Leibler divergence based kernel for SVM classification in multimedia applications},
  author={Moreno, Pedro and Ho, Purdy and Vasconcelos, Nuno},
  booktitle={Advances in Neural Information Processing Systems},
  volume={16},
  year={2003}
}

@inproceedings{yao2011symmetric,
  title={A symmetric KL divergence based spatiogram similarity measure},
  author={Yao, Zhice and Lai, Zhihui and Liu, Wanquan},
  booktitle={2011 18th IEEE International Conference on Image Processing},
  pages={193--196},
  year={2011},
  organization={IEEE}
}

\begin{figure*}
	\vspace{2.4cm}
	\centering
	\includegraphics[width=\textwidth]{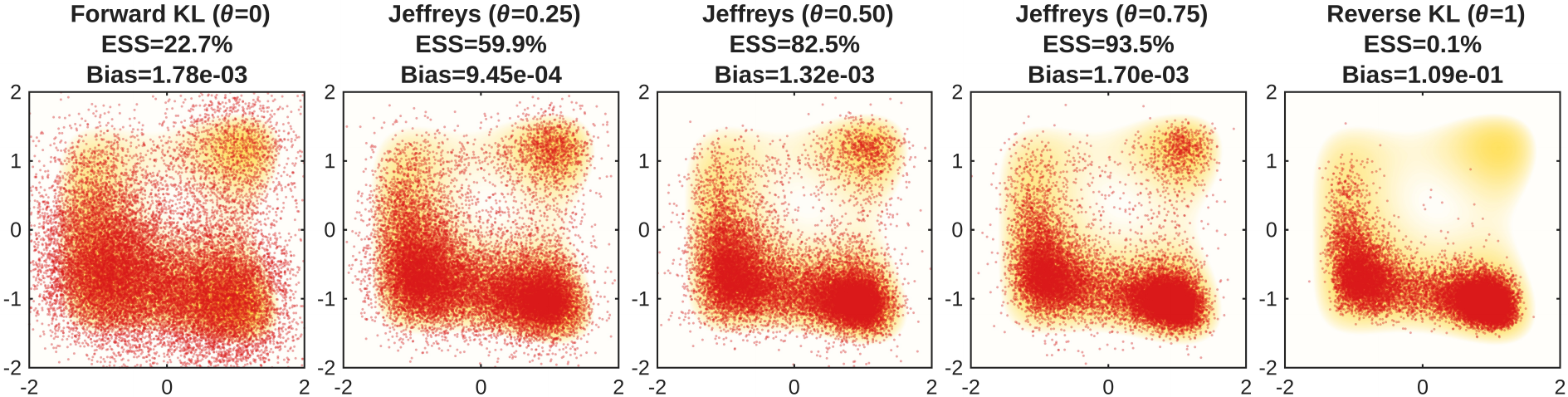}
	\includegraphics[width=\textwidth]{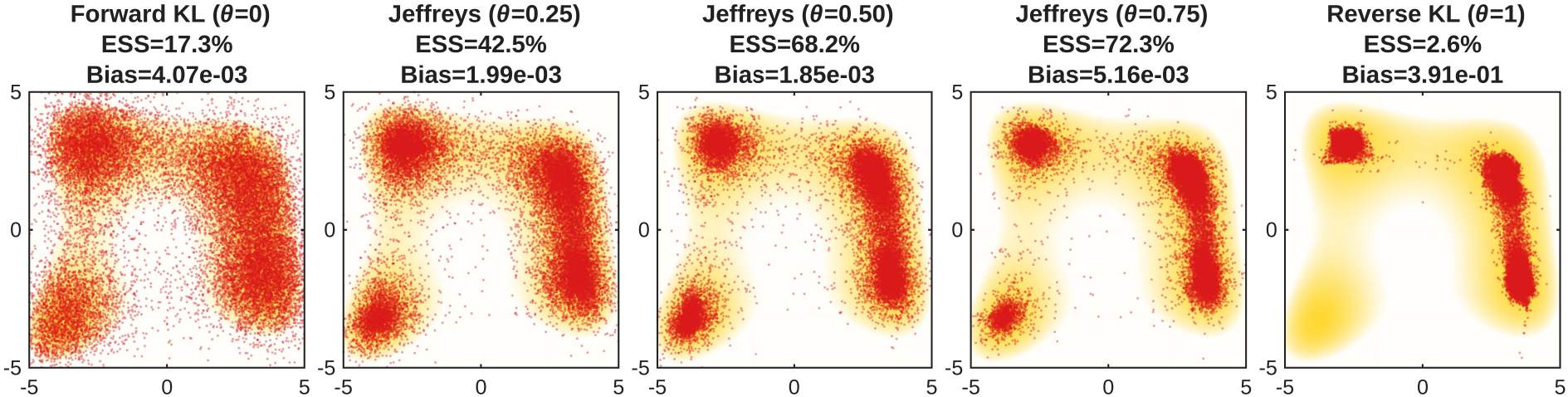}
	\includegraphics[width=\textwidth]{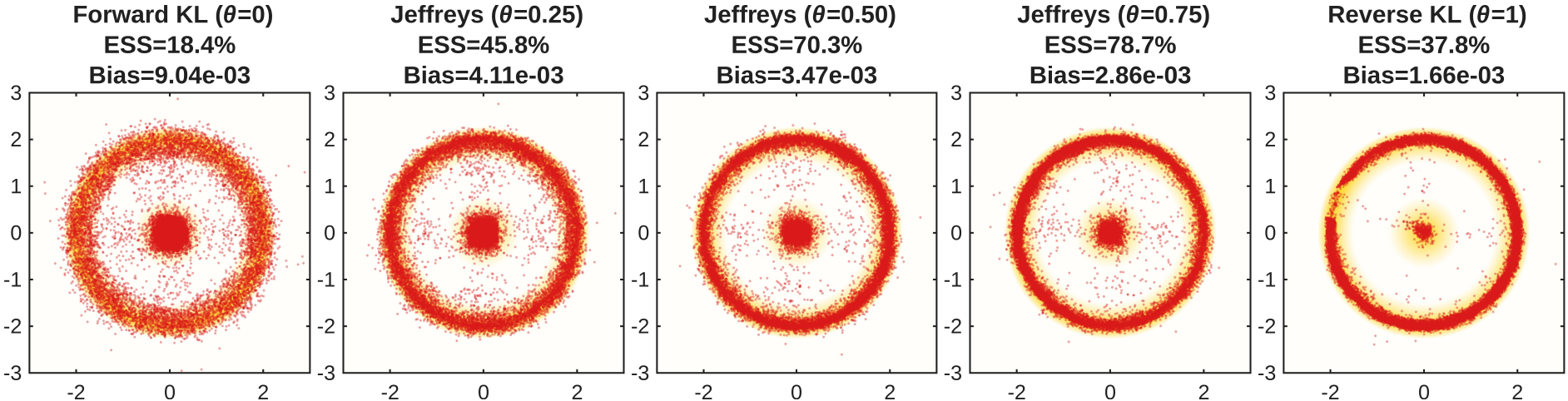}
	\includegraphics[width=\textwidth]{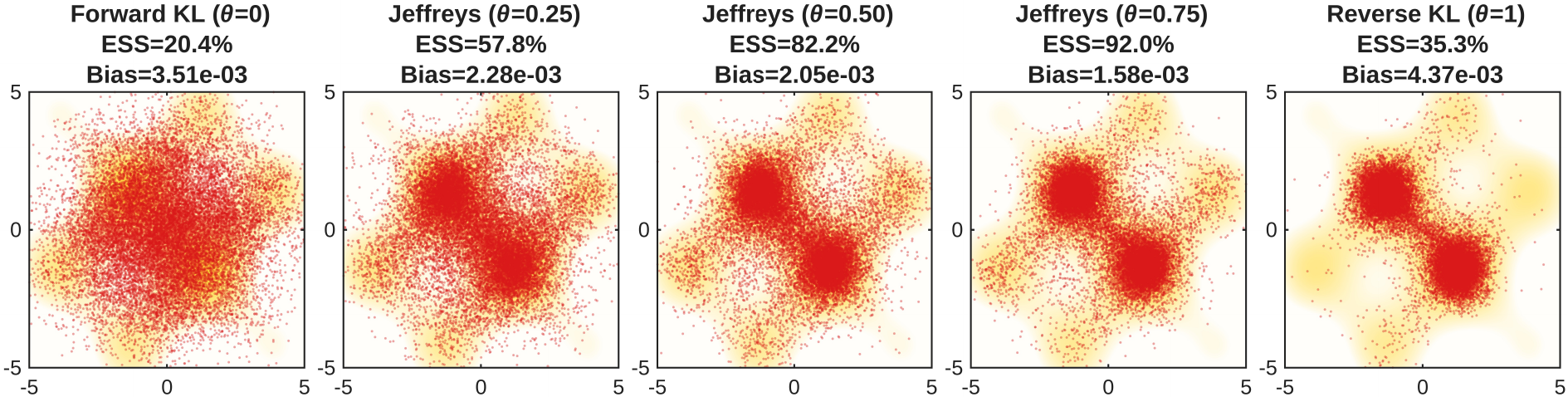}
	\caption{Density plots of the generated samples across four 2D landscapes. The Jeffreys Flow ($0 < \theta < 1$) robustly corrects the noisy artifacts of Forward KL ($\theta=0$) and strictly prevents the destructive mode collapse inherent to Reverse KL ($\theta=1$).}
	\label{fig: single_step}
\end{figure*}

\begin{figure*}
	\centering
	\includegraphics[width=\textwidth]{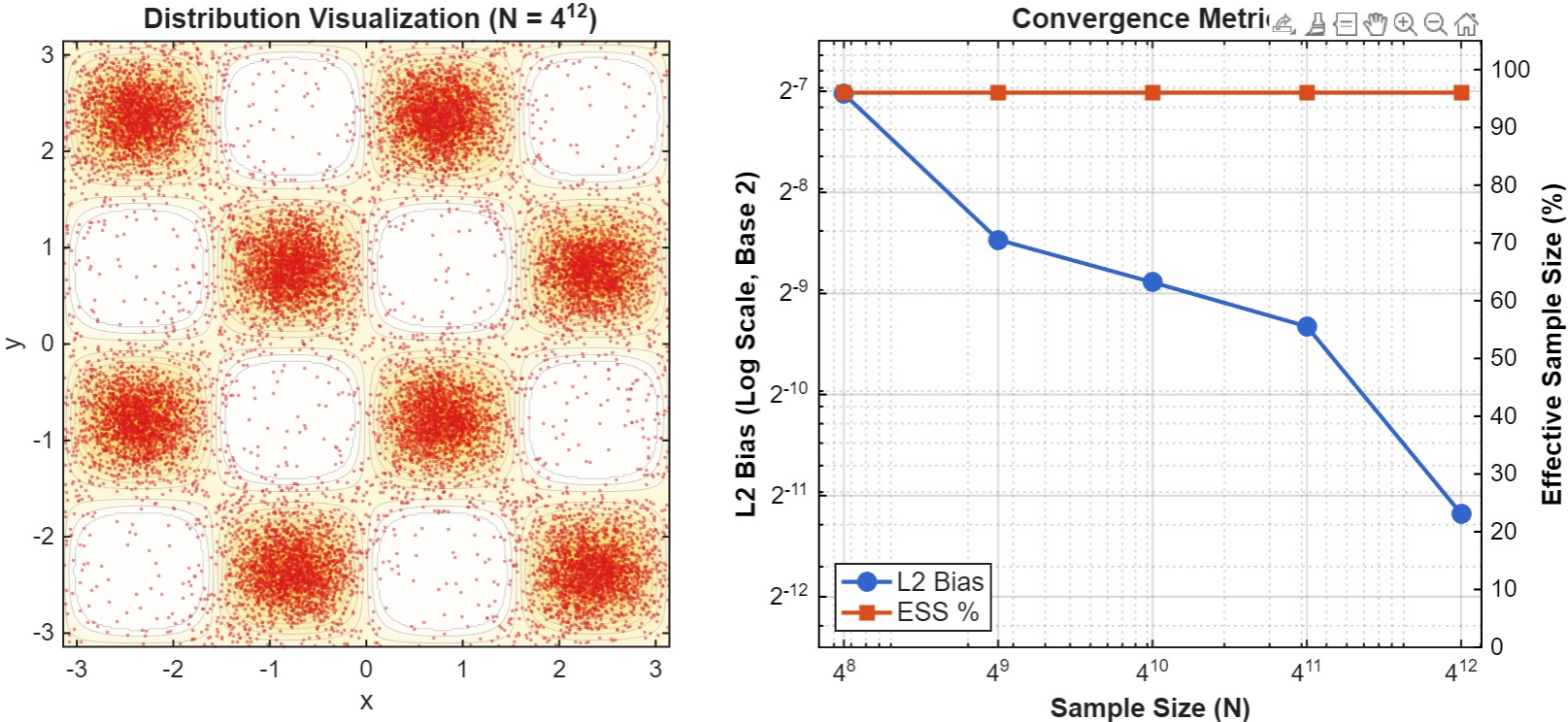}
	\caption{Left: The generated distribution from the Jeffreys Flow. The scatter samples accurately capture all modes of the periodic potential ($N=4^{12}$). Right: As sample size $N$ increases, the ESS remains consistently optimal ($\approx 100\%$) while the $L^2$ bias strictly follows the theoretical Monte Carlo convergence rate.}
	\label{fig: single_step_periodic}
\end{figure*}

\begin{figure*}
	\centering
	\includegraphics[width=\textwidth]{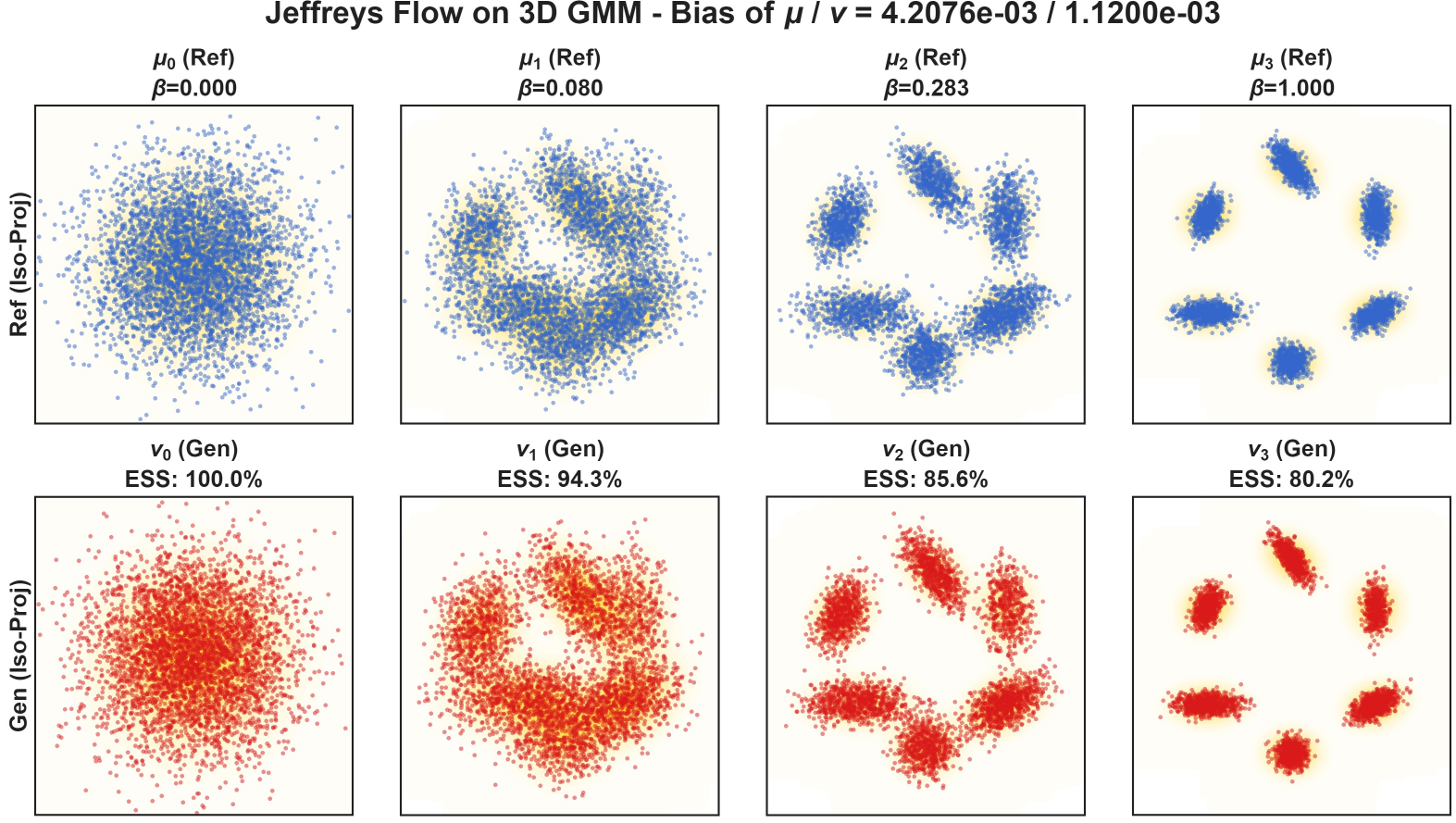}
	\caption{Sequential distillation on the 3D Gaussian Mixture Model. Top: Reference PT samples at various temperatures. Bottom: Generated samples from the distilled Jeffreys Flow. The resulting bias of the Jeffreys Flow is much smaller than PT.}
	\label{fig: gm_results}
\end{figure*}

\begin{figure*}
	\centering
	\includegraphics[width=0.9\textwidth]{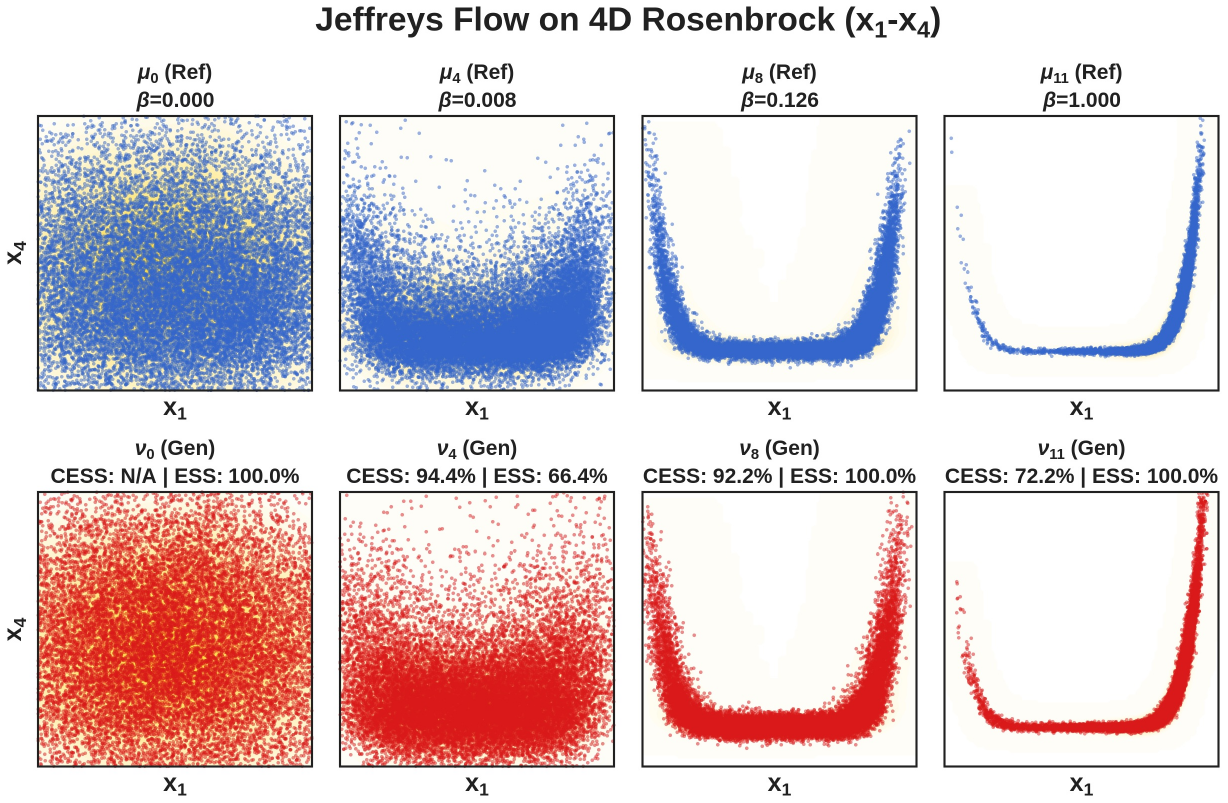}
	\caption{Sequential distillation on the 4D Rosenbrock. Top: Reference PT samples. Bottom: Generated samples from the Jeffreys Flow successfully learning the highly nonlinear correlations.}
	\label{fig: rb_results}
\end{figure*}

\begin{figure*}
	\centering
	\includegraphics[width=0.9\textwidth]{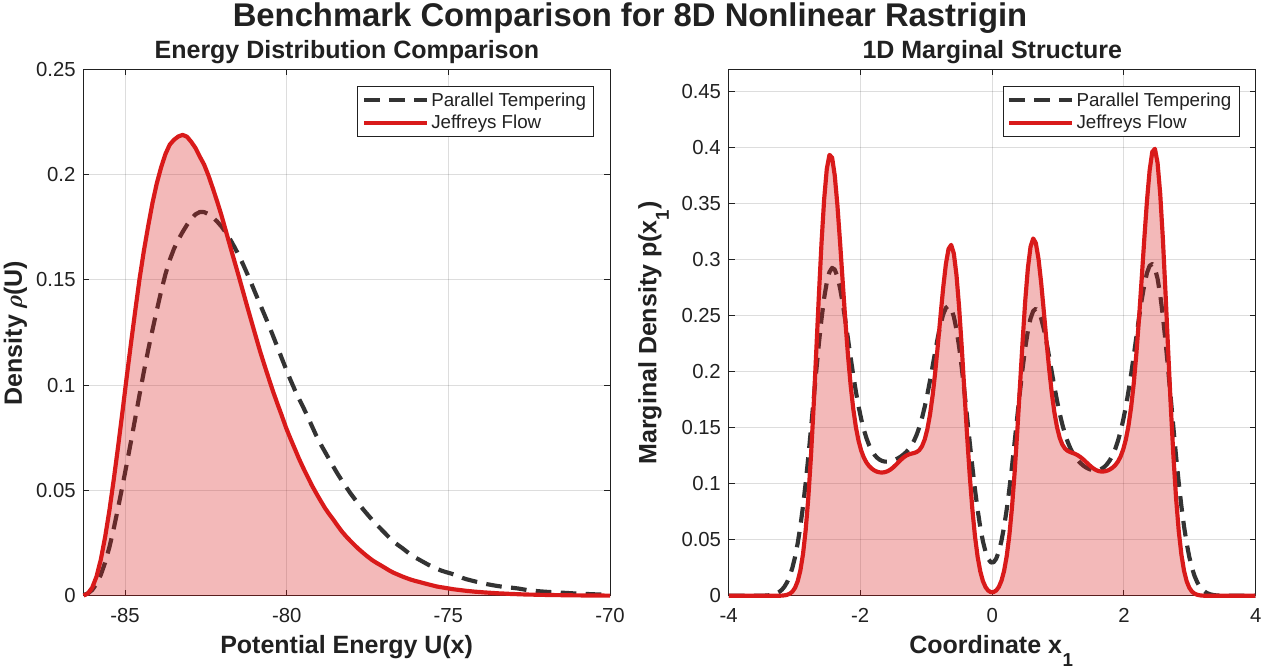}
	\caption{Comparison for 8D Nonlinear Rastrigin. Left: Potential energy distribution $P(U)$ comparison between PT and Jeffreys Flow. Right: 1D marginal distribution $p(x_1)$ comparison. Both methods show consistent mode coverage and energy profiles.}
	\label{fig: nr_results}
\end{figure*}

\begin{table*}[htbp]
    \centering
    \setlength{\tabcolsep}{4.5pt}
	\setstretch{1.2}
    \begin{tabular}{c |*{11}{c}}
       \hline
       Step & 1 & 2 & 3 & 4 & 5 & 6 & 7 & 8 & 9 & 10 & 11 \\
	   \hline
       $\beta$ & 0.020 & 0.032 & 0.050 & 0.100 & 0.150 & 0.200 & 0.250 & 0.300 & 0.350 & 0.400 & 0.450 \\
       CESS & 93.7\% & 95.9\% & 91.6\% & 77.6\% & 70.7\% & 78.1\% & 81.4\% & 87.7\% & 89.9\% & 92.3\% & 94.1\% \\
       ESS & 93.7\% & 88.5\% & 78.1\% & 57.1\% & \fbox{100\%} & 78.1\% & 57.0\% & \fbox{100\%} & 89.9\% & 80.1\% & 71.5\% \\
       \hline\hline
       Step & 12 & 13 & 14 & 15 & 16 & 17 & 18 & 19 & 20 & 21 & 22 \\
	   \hline
       $\beta$ & 0.500 & 0.550 & 0.600 & 0.650 & 0.700 & 0.750 & 0.800 & 0.850 & 0.900 & 0.950 & 1.000 \\
       CESS & 94.6\% & 95.4\% & 95.4\% & 95.9\% & 95.7\% & 96.1\% & 95.5\% & 95.9\% & 95.0\% & 95.3\% & 95.0\% \\
       ESS & 63.0\% & 54.4\% & \fbox{100\%} & 95.9\% & 88.8\% & 79.3\% & 66.7\% & 52.5\% & \fbox{100\%} & 95.3\% & 85.3\% \\
       \hline 
    \end{tabular}
	\caption{Performance metrics for the 8D Nonlinear Rastrigin potential. The table displays the inverse temperature $\beta$, the CESS, and the ESS at each stage. The ESS is reset to $100\%$ (marked as boxed) whenever adaptive resampling is performed.}
	\label{tab: nr_metrics}
\end{table*}

\begin{figure*}
	\centering
	\includegraphics[width=0.6\textwidth]{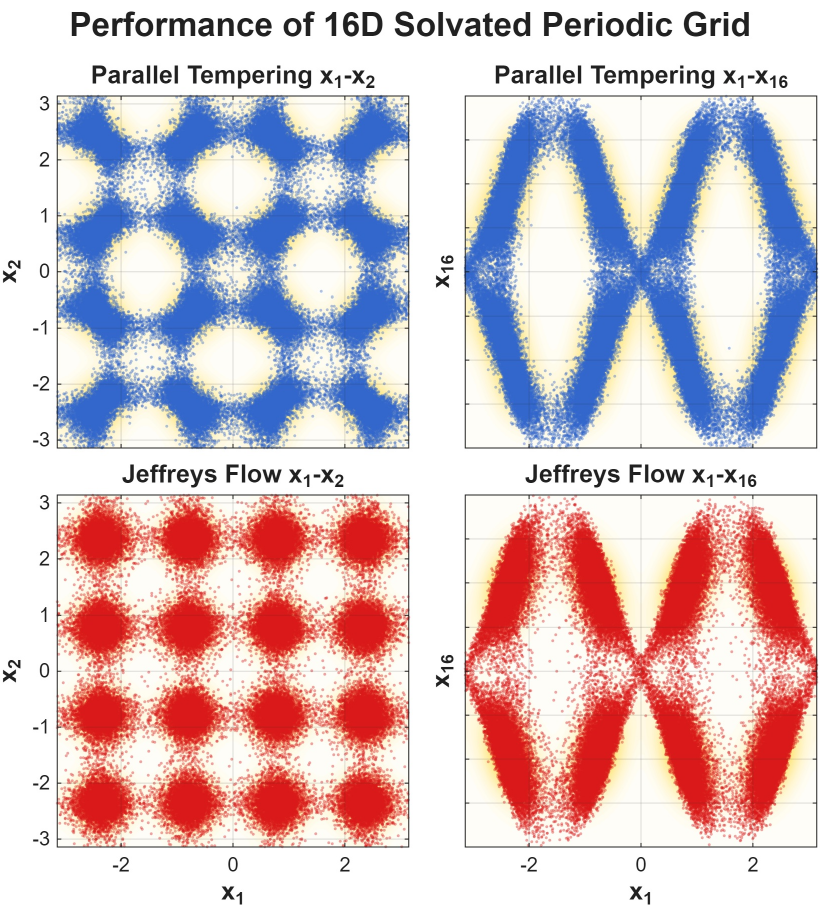}
	\caption{Decorrelation in the 16D Solvated Periodic Grid. Top: PT reference samples exhibit spurious diagonal correlations. Bottom: Jeffreys Flow successfully uncouples the bath to recover the exact independent checkerboard structure.}
	\label{fig: sl_joint}
\end{figure*}

\begin{figure*}
	\centering
	\includegraphics[width=0.4\textwidth]{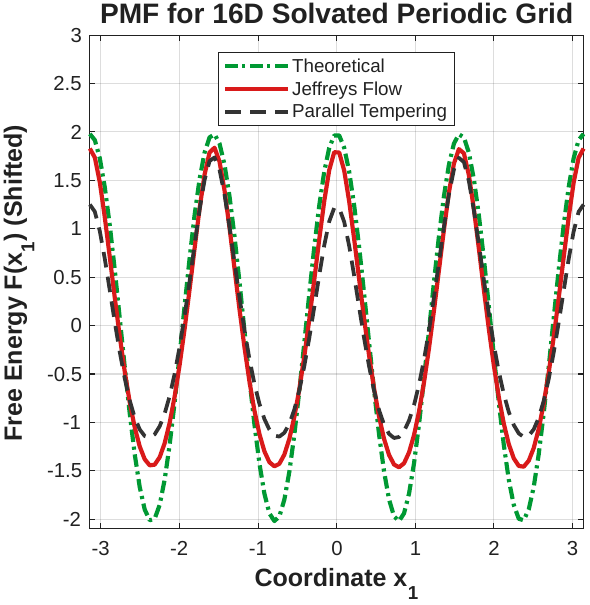}
	\caption{Potential of Mean Force (PMF) along $x_1$. Jeffreys Flow perfectly reconstructs the analytical free energy barriers that PT completely fails to cross.}
	\label{fig: sl_pmf}
\end{figure*}

\begin{figure*}
	\centering
	\includegraphics[width=0.6\textwidth]{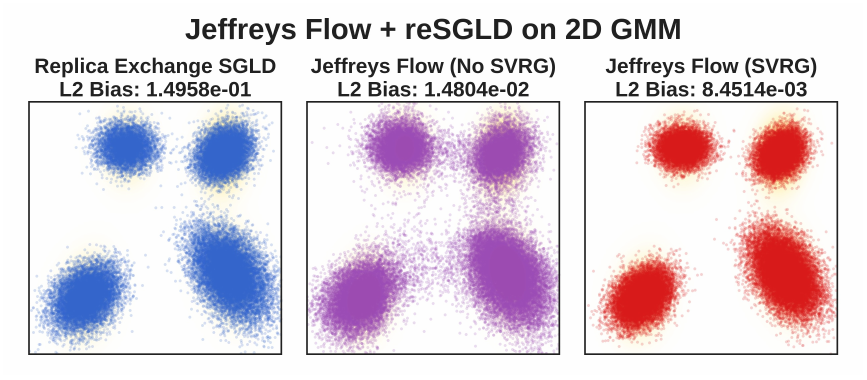}
	\caption{Decorrelation in 2D GMM. Visual comparison between reSGLD and Jeffreys Flow. Jeffreys (SVRG) fully resolves the probability density distribution. Jeffreys (No SVRG) generates samples with excessive spread and undesirable artificial tails.}
	\label{fig: resgld_results}
\end{figure*}

\begin{table*}[htbp]
    \centering
    \setlength{\tabcolsep}{4.5pt}
	\setstretch{1.2}
    \begin{tabular}{c |*{6}{c}}
        \hline
        Step & 0 & 1 & 2 & 3 & 4 & 5 \\
        $\beta$ & 0.0 & 0.05 & 0.2 & 0.4 & 0.7 & 1.0 \\
        \hline\hline
        reSGLD & 5.97e-02 & 5.90e-02 & 6.11e-02 & 7.49e-02 & 1.12e-01 & 1.50e-01 \\
        Jeffreys (No SVRG) & 2.81e-03 & 2.57e-02 & 1.91e-02 & 1.96e-02 & 1.32e-02 & 1.48e-02 \\
        Jeffreys (SVRG) & 4.94e-03 & 1.65e-03 & 1.56e-02 & 6.44e-03 & 4.60e-03 & 8.45e-03 \\
        \hline
    \end{tabular}
    \caption{Bias Evaluation across the intermediate targets in the 2D GMM experiment. The bias is drastically reduced by training with Jeffreys Flow, filtering out aggressive discretization errors inherent in reSGLD.}
    \label{tab: resgld_bias}
\end{table*}

\begin{table*}[htbp]
    \centering
	\setlength{\tabcolsep}{6pt}
	\setstretch{1.2}
    \begin{tabular}{c |*{6}{c}}
        \hline
        Step & 0 & 1 & 2 & 3 & 4 & 5 \\
        $\beta$ & 0.0 & 0.05 & 0.2 & 0.4 & 0.7 & 1.0 \\
        \hline\hline
        Jeffreys (No SVRG) & -- & 99.17\% & 97.52\% & 94.42\% & 87.17\% & 76.80\% \\
        Jeffreys (SVRG) & -- & 99.25\% & 96.77\% & 95.34\% & 91.77\% & 86.02\% \\
        \hline
    \end{tabular}
    \caption{ESS across the intermediate targets in the 2D GMM experiment. High quantitative ESS values demonstrate the ability of Jeffreys Flow as a robust feed-forward Boltzmann generator for direct, independent sample generation.}
    \label{tab: resgld_ess}
\end{table*}

\begin{table*}[htbp]
	\centering
	\setlength{\tabcolsep}{6pt}
	\setstretch{1.2}
	\begin{tabular}{c |*{8}{c}}
		\hline
		Step & 1 & 2 & 3 & 4 & 5 & 6 & 7 & 8 \\
		\hline\hline
		Exact & 83.93\% & 91.49\% & 92.48\% & 88.69\% & 77.30\% & 85.23\% & 89.56\% & 89.24\% \\
		Full Potential & 83.42\% & 88.28\% & 93.09\% & 88.83\% & 79.90\% & 86.18\% & 89.25\% & 91.15\% \\
		Stochastic & 61.71\% & 64.27\% & 84.69\% & 90.00\% & 89.08\% & 93.23\% & 79.66\% & 94.14\% \\
		\hline
	\end{tabular}
	\caption{CESS across the intermediate targets in the 2D Screened Poisson experiment. All evaluated configurations sustain extremely high CESS scores, confirming the robustness and massive acceleration achieved by the trained Jeffreys Flow.}
	\label{tab: poisson_ess}
\end{table*}

\begin{figure*}
	\centering
	\includegraphics[width=\textwidth]{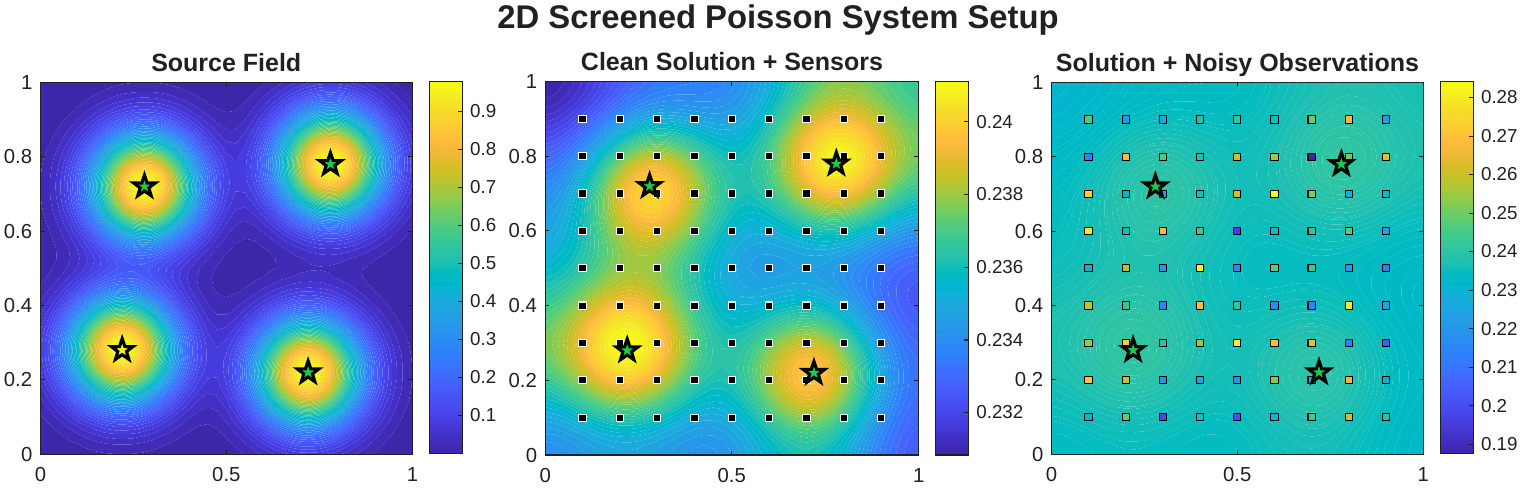}
	\caption{2D Screened Poisson Equipment Setup. Left: The true multi-source distribution field generated by the ground truth locations. Right: The resulting physical response $u(x)$ modeled by solving the discretized Poisson PDE. The $81$ fixed local sensors (square markers) collect the noisy measurements used to infer the unknown source locations (star markers).}
	\label{fig: poisson_setup}
\end{figure*}

\begin{figure*}
	\centering
	\includegraphics[width=0.75\textwidth]{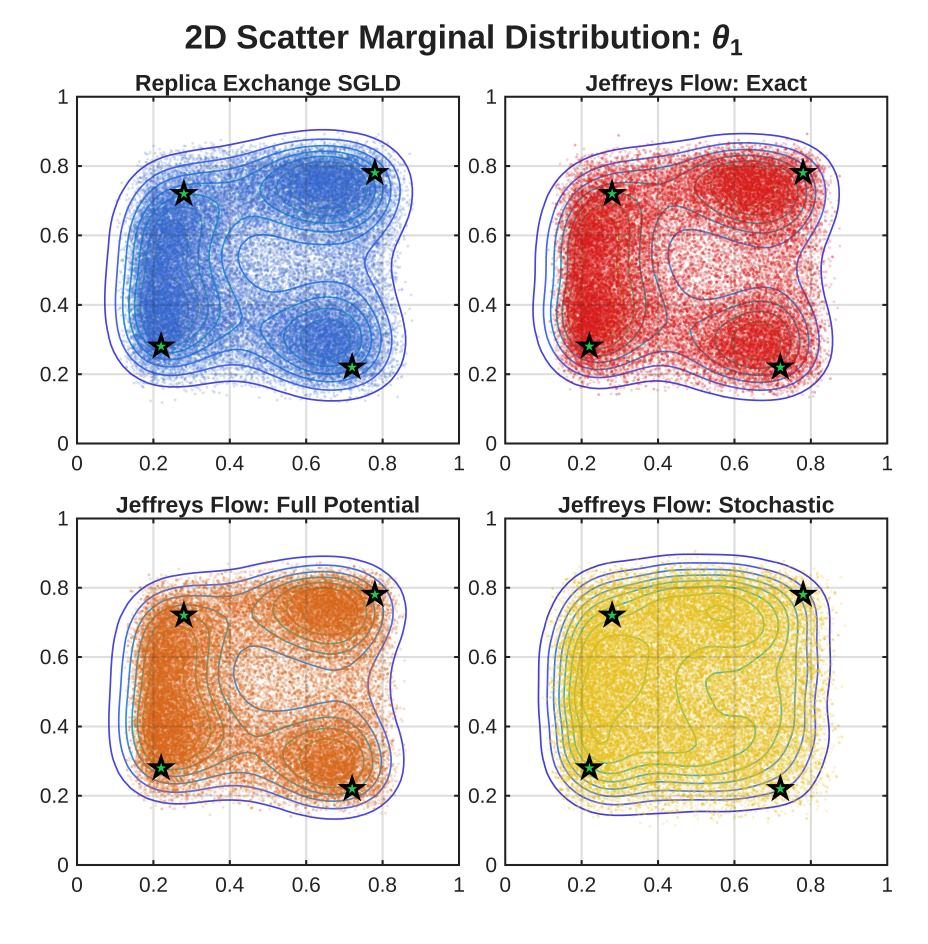}
	\caption{Marginal Posterior for Source 1 ($\theta_1$) in the Poisson Inverse Problem. The Exact configuration produces tightly bounded, high-fidelity posterior samples. The Neural Network-approximated Stochastic result exhibits structural degradation, validating the necessity of using exact PDE evaluations for calculating the importance weights.}
	\label{fig: poisson_marginals}
\end{figure*}

\begin{table*}[htbp]
	\centering
	\setlength{\tabcolsep}{6pt}
	\setstretch{1.2}
	\begin{tabular}{c |*{7}{c}}
		\hline
		Discretization $N$ & 8 & 12 & 16 & 20 & 24 & 28 & 32 \\
		\hline\hline
		$L^2$ Bias & 1.60e-02 & 7.45e-03 & 5.93e-03 & 3.56e-03 & 4.02e-03 & 2.00e-03 & 3.84e-04 \\
		\hline
	\end{tabular}
	\caption{Sampling Bias Evaluation in High Dimensions. Using solely a single transport map trained exclusively at the truncated dimension $N_0 = 8$, the reweighted Jeffreys Flow generates high-fidelity sampling accuracy uniformly up to $N = 32$, with the bias adhering to the rapid theoretical $\mathcal{O}(1/N^2)$ discretization error decay.}
	\label{tab: pimc_bias}
\end{table*}

\begin{figure*}
	\centering
	\includegraphics[width=0.6\textwidth]{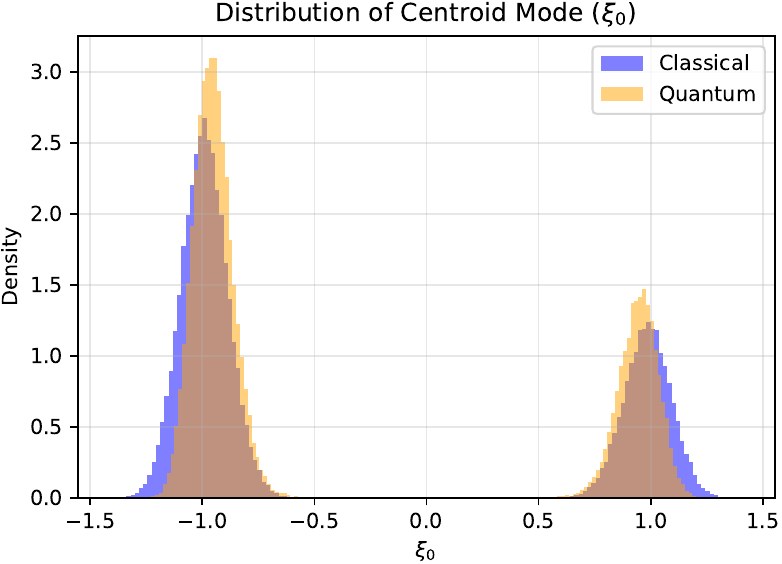}
	\caption{Quantum Path Integral Distillation. Comparison of the marginal spatial density between the classical distribution constructed from the empirical training samples and the distilled quantum distribution extracted using Jeffreys Flow.}
	\label{fig: pimc_density}
\end{figure*}

\begin{figure*}
	\centering
	\includegraphics[width=0.6\textwidth]{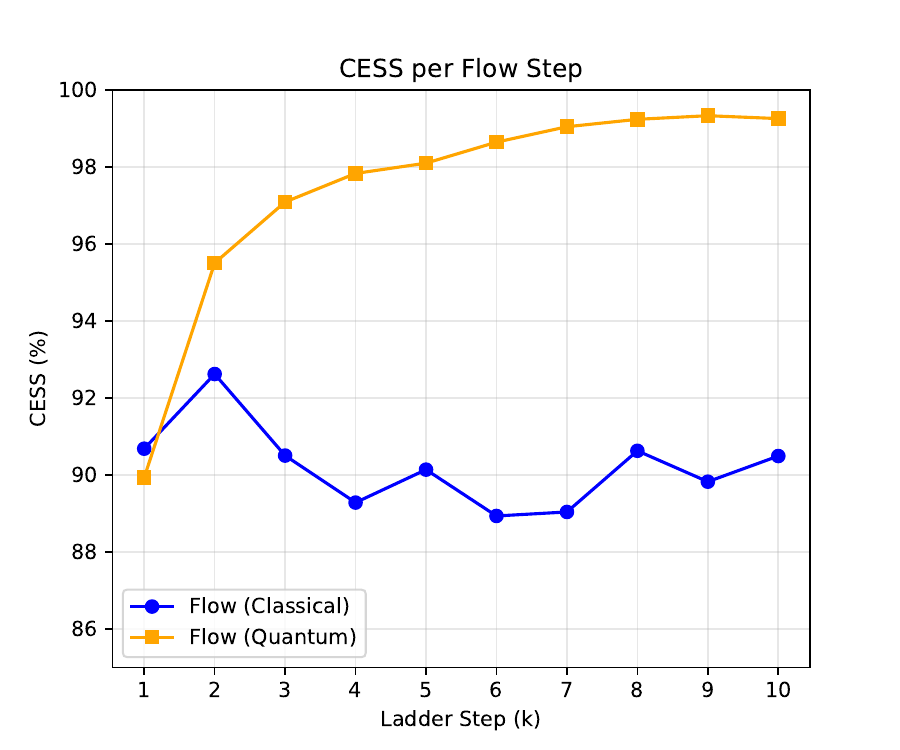}
	\caption{Progression of the CESS over the intermediate bridging steps. Sustained high CESS scores confirm the robustness and stability of the transport map during the classical-to-quantum distillation sequence.}
	\label{fig: pimc_cess}
\end{figure*}

\end{document}